\def\eqref#1{equation~\ref{#1}}
\def\1{\bm{1}}
\DeclareMathAlphabet{\mathsfit}{\encodingdefault}{\sfdefault}{m}{sl}
\SetMathAlphabet{\mathsfit}{bold}{\encodingdefault}{\sfdefault}{bx}{n}
\theoremstyle{plain}
\newtheorem{proposition}{Proposition}
\theoremstyle{definition}
\theoremstyle{remark}
\title{Guided Policy Optimization under Partial Observability}
\author{Yueheng Li$^{1}$, Guangming Xie$^{1*}$, Zongqing Lu$^{2}$\thanks{Corresponding authors.} \\
$^{1}$School of Advanced Manufacturing and Robotics, Peking University \\
$^{2}$School of Computer Science, Peking University \\
\texttt{\{liyueheng,xiegming,zongqing.lu\}@pku.edu.cn} \\
}
\begin{document}

\maketitle

\begin{abstract}
Reinforcement Learning (RL) in partially observable environments poses significant challenges due to the complexity of learning under uncertainty. 
While additional information, such as that available in simulations, can enhance training, effectively leveraging it remains an open problem. 
To address this, we introduce Guided Policy Optimization (GPO), a framework that co-trains a guider and a learner. 
The guider takes advantage of privileged information while ensuring alignment with the learner's policy that is primarily trained via imitation learning. 
We theoretically demonstrate that this learning scheme achieves optimality comparable to direct RL, thereby overcoming key limitations inherent in existing approaches. 
Empirical evaluations show strong performance of GPO across various tasks, including continuous control with partial observability and noise, and memory-based challenges, significantly outperforming existing methods.
\end{abstract}

\section{Introduction}

Many real-world tasks can be formulated as sequential decision-making problems where agents take actions in an environment to achieve specific goals over time \citep{puterman2014markov}. 
Reinforcement Learning (RL) has emerged as a powerful tool for solving such tasks, leveraging trial-and-error learning to optimize long-term rewards \citep{Sutton1998}. 
Despite its success, RL encounters significant hurdles in complex and partially observable environments, where agents often operate with limited or noisy information \citep{madani1999undecidability}. 
However, during training, we often have access to extra information that could significantly enhance learning efficiency and performance
\citep{doi:10.1126/scirobotics.abc5986,pmlr-v164-chen22a}.
For instance, in robotics, while real-world sensor data may be noisy or incomplete, simulation environments typically provide full state observability.

Despite the potential of such privileged information, effectively leveraging it in practice remains a major challenge. 
One popular strategy to utilize this information is through methods like Imitation Learning (IL) \citep{hussein2017imitation}, Teacher-Student Learning (TSL), or policy distillation \citep{pmlr-v89-czarnecki19a}. 
In these approaches, a teacher, equipped with privileged information, provides supervision to guide the student’s learning process.
However, this strategy introduces its own set of challenges: a teacher with privileged information may impose an unrealistically high-performance standard, making it difficult for the student to effectively imitate.
This issue, known as the “impossibly good” teacher \citep{Walsman2023ImpossiblyGE} or imitation gap \citep{10.5555/3540261.3541724}, can hinder learning and degrade performance. 
To address this, previous work has sought to integrate environmental rewards into the learning process of the student.  
One approach is to combine RL with IL \citep{10.5555/3540261.3541724, pmlr-v202-shenfeld23a, pmlr-v205-nguyen23a}, switching to RL-based training when the teacher becomes inimitable. 
Another approach modifies environmental rewards based on the teacher through policy distillation \citep{pmlr-v89-czarnecki19a, Walsman2023ImpossiblyGE}. 
However, such methods diminish the utility of privileged information, 
%and the pre-trained teacher can be costly, 
often resulting in inefficient use of the teacher's knowledge.

To better exploit available information, we propose training a ``possibly good" teacher, i.e., a teacher whose policy remains within the learner’s imitable region.
Inspired by Guided Policy Search (GPS) \citep{levine2013guided, NIPS2016_a00e5eb0}, we introduce Guided Policy Optimization (GPO), a novel framework that trains both the teacher and the student simultaneously while ensuring that the teacher’s policy remains aligned with that of the student. The key insight behind GPO is that by leveraging privileged information during training, the teacher can be trained more effectively while ensuring that its performance is ``possibly good," thus facilitating easier imitation by the student.
Theoretically, we show that the student can achieve optimality similar to direct RL training, mitigating the suboptimality and imitation gaps that often arise from purely teacher-based supervision.
We empirically validate our algorithm across various tasks, including didactic examples, challenging continuous control tasks in partially observable, noisy environments within the Brax \citep{brax2021github} domain, and in memory-based tasks from the POPGym \citep{morad2023popgym} benchmark.
GPO shows consistent and significant improvements, underscoring its ability to exploit extra information and deliver robust performance across diverse domains.

\section{Background}
We consider Partially Observable Markov Decision Process (POMDP) \citep{KAELBLING199899}, which is characterized by the tuple $\langle \mathcal{S},\mathcal{A},r,\mathcal{P},\mathcal{O},\gamma\rangle$. $\mathcal{S}$ represents the set of states, $\mathcal{A}$ the set of actions, $r$ the reward function, $\mathcal{P}$ the transition probability function, $\mathcal{O}$ the partial observation function and $\gamma$ the discount factor.
At each time step $t$, the agent receives a partial observation $o_t\sim\mathcal{O}(\cdot|s_t)$ for current state $s_t\in\mathcal{S}$. 
The agent then selects an action $a_t\in\mathcal{A}$ according to $o_t$ or its action-observation history $\tau_t:\{o_0,a_0,o_1,a_1...,o_t\}$. The state transitions to the next state $s_{t+1}$ according to $\mathcal{P}(s_{t+1}|s_t,a_t)$, and the agent receives a reward $r_t$. 
The goal for the agent is to find the optimal policy $\pi^*:\tau\to\Delta(\mathcal{A})$ that maximizes the return, expressed as $\pi^*=\text{arg}\max_\pi V_\pi$, where $V_\pi=\mathbb{E}[\sum_{t=0}^\infty\gamma^tr_t|\pi]$ represents cumulative rewards.
When full state information $s$ is available during training, we may also define a policy $\mu: s \to \Delta(\mathcal{A})$ based on privileged information. 
For clarity, throughout this paper we refer to such privileged training inputs simply as the state $s$, though in practice they could take other forms.
Likewise, we refer to partial observations simply as $o$, though in practice they may include histories or other derived features.

Finally, we emphasize that in the remainder of this paper, the term “optimal” refers to the student’s optimal policy under partial observability—not the teacher’s optimal policy under privileged information, which is generally unattainable for the student.

\subsection{Teacher-Student Learning}
Since we consider both training the teacher and student, 
in this paper, we use the term Teacher-Student Learning (TSL) to broadly refer to Imitation Learning (IL) \citep{hussein2017imitation}, policy distillation \citep{pmlr-v89-czarnecki19a}, and related approaches, as there is no fundamental distinction between them. 
In TSL, the teacher policy is typically pre-trained using RL or derived from other methods such as a classical controller, which is assumed to effectively accomplish the desired task. 
The goal is for the teacher to somehow provide supervision to the student in learning to solve the same task.

A straightforward approach to training the agent is to directly supervise the student’s policy $\pi$ using the teacher's policy $\mu$, similar to Behavioral Cloning (BC) \citep{pomerleau1991efficient,torabi2018behavioral}:
\begin{equation}
    \min_\pi \mathbb{E}_{s\sim d_\mu}[\text{D}_{\text{KL}}(\mu(\cdot|s),\pi(\cdot|s))],
\end{equation}
where $d_\mu$ is the distribution of states under the teacher’s policy, and $\text{D}_{\text{KL}}$ is the Kullback-Leibler (KL) divergence. This objective encourages the student’s policy to mimic the teacher’s policy for the observed states.
However, when the teacher’s policy is based on privileged information, the student can only learn the statistical average of the teacher’s actions \citep{Warrington2020RobustAL,10.5555/3540261.3541724},
and be strictly suboptimal \citep{NEURIPS2024_74d188c5}.
In this paper, we refer to such a teacher as \textit{inimitable}, and we highlight this limitation through two illustrative examples in the next subsection.

\subsection{Didactic Examples}\label{sec:dida}

\begin{table}[h]
    \centering
    \begin{minipage}{0.45\textwidth}
        \centering
        \begin{tabular}{|c|c|c|c|}
\hline
\diagbox{state}{action} & $a_L$ & $a_R$ & $a_l$ \\
\hline
$s_L$   & 1   & 0  & -0.1\\
\hline
$s_R$   & 0   & 1  & -0.1\\
\hline
\end{tabular}
        \caption{TigerDoor problem}\label{tab:tiger}
    \end{minipage}
    \hfill
    \begin{minipage}{0.45\textwidth}
        \centering
\begin{tabular}{|c|c|c|}
\hline
\diagbox{state}{action} & $a_L$ & $a_R$  \\
\hline
$s_L$   & 2   & 0  \\
\hline
$s_R$   & 0   & 1  \\
\hline
\end{tabular}
\caption{TigerDoor-alt problem}\label{tab:tiger'}
    \end{minipage}
\end{table}

\textbf{TigerDoor}.
In the classic TigerDoor problem \citep{10.5555/3091622.3091666}, there are two doors with a tiger hidden behind one of them. 
The possible state $s_L$ (tiger behind the left door) and $s_R$ (tiger behind the right door), with equal probabilities for each, form $\mathcal{S} = \{s_L, s_R\}$. 
The action set is $\mathcal{A} = \{a_L, a_R, a_l\}$, where $a_L$ and $a_R$ denote opening the left and right doors, respectively, and $a_l$ denotes listening to determine the tiger's location. 
The teacher knows the tiger's location whereas the student can only ascertain it after choosing $a_l$.
The payoff matrix is shown in Table \ref{tab:tiger}.
The optimal policy for the teacher is to always choose the correct door without listening, whereas the student's optimal strategy involves first listening to locate the tiger.
Consequently, the student cannot learn the optimal policy through supervision from the teacher, as the teacher never chooses $a_l$.
Under the teacher's supervision, the student will only learn to randomly select between $a_L$ and $a_R$, resulting in an expected reward of 0.5.
This scenario poses challenges for the supervised student, as the teacher fails to explore and gather essential information for the learner.

\textbf{TigerDoor-alt}.
We also introduce an alternative version of the problem, TigerDoor-alt (Table~\ref{tab:tiger'}), which further illustrates the imitation gap, even when no exploratory actions are required. 
In this scenario, the listening action $a_l$ is removed, and the reward for correctly selecting the left door is increased to 2.
Similarly, the teacher continues to select the correct door, while the student learns to randomly choose between the two doors, yielding an expected reward of 0.75. 
However, the optimal policy for the student is to always choose the left door, which provides an expected reward of 1.
This discrepancy arises from the loss of information when converting the reward-based objective into a policy-supervised objective.

The current solution to this issue is to incorporate rewards into the student's learning process. To effectively utilize teacher supervision, there are two kinds of approaches.
The first type dynamically adjusts the weight of the supervision loss between the teacher and pure RL training. 
This allows the algorithm to switch to pure RL when the teacher is deemed inimitable \citep{10.5555/3540261.3541724, pmlr-v202-shenfeld23a, shenfeld2023tgrl}. However, such approaches fail to fully utilize privileged information and may waste the valuable, often expensive, pre-trained teacher.
The second kind incorporates teacher supervision into the reward signal, for instance, by using reward shaping via the teacher's value function \citep{Walsman2023ImpossiblyGE}. However, this supervision is indirect and may require additional learning. 
Crucially, to the best of our knowledge, none of the existing methods provide theoretical guarantees that teacher supervision will actually be beneficial.

Another research direction attempts to reconstruct privileged information from partial observations. 
However, such methods require the MDP to be \emph{decodable}~\citep{efroni2022provable}, which is clearly infeasible in the TigerDoor setting. 
For a more detailed discussion of related work, see Appendix~\ref{app:rw}.

\section{Method}
We present our Guided Policy Optimization (GPO) framework, which co-trains two entities: the guider and the learner, which we use to differentiate from existing TSL methods.
GPO iteratively updates both policies to ensure alignment. 
We then explore both the theoretical properties and practical implementation of GPO, introducing two variants: GPO-penalty and GPO-clip.

\subsection{From GPS to GPO}\label{sec:3.1}

Unlike direct policy search methods, GPS \citep{levine2013guided, NIPS2016_a00e5eb0} does not optimize policy parameters directly. 
Instead, it introduces an intermediate agent (guider) and employs trajectory optimization to learn a time-varying linear-Gaussian policy, which is then used to train a neural network policy (learner) through supervised learning. 
Although GPS is a model-based method and is not directly applicable in our setting, its idea of introducing an intermediate agent to guide policy learning can be extended to an RL algorithm in the context of POMDPs.

Specifically, since the guider is only used during training, it can access any type of privileged information. 
The key requirement is to ensure that the guider is imitable by the learner, which motivates us to introduce the GPO framework, which operates through the following four steps:
\begin{itemize}[leftmargin=1em]
\item \textbf{Data Collection}: Collect trajectories by executing the guider's policy, denoted as $\mu^{(k)}$. 
\item \textbf{Guider Training}: Update the guider $\mu^{(k)}$ to $\hat{\mu}^{(k)}$ according to RL objective $V_{\mu^{(k)}}$. 
\item \textbf{Learner Training}: Update the learner to $\pi^{(k+1)}$ by minimizing the distance $D(\pi,\hat{\mu}^{(k)})$. 
\item \textbf{Guider Backtracking}: Set $\mu^{(k+1)}(\cdot|s) = \pi^{(k+1)}(\cdot|o)$ for all states $s$ before the next iteration.
\end{itemize}

\begin{figure*}
    \centering
\includegraphics[width=0.8\linewidth]{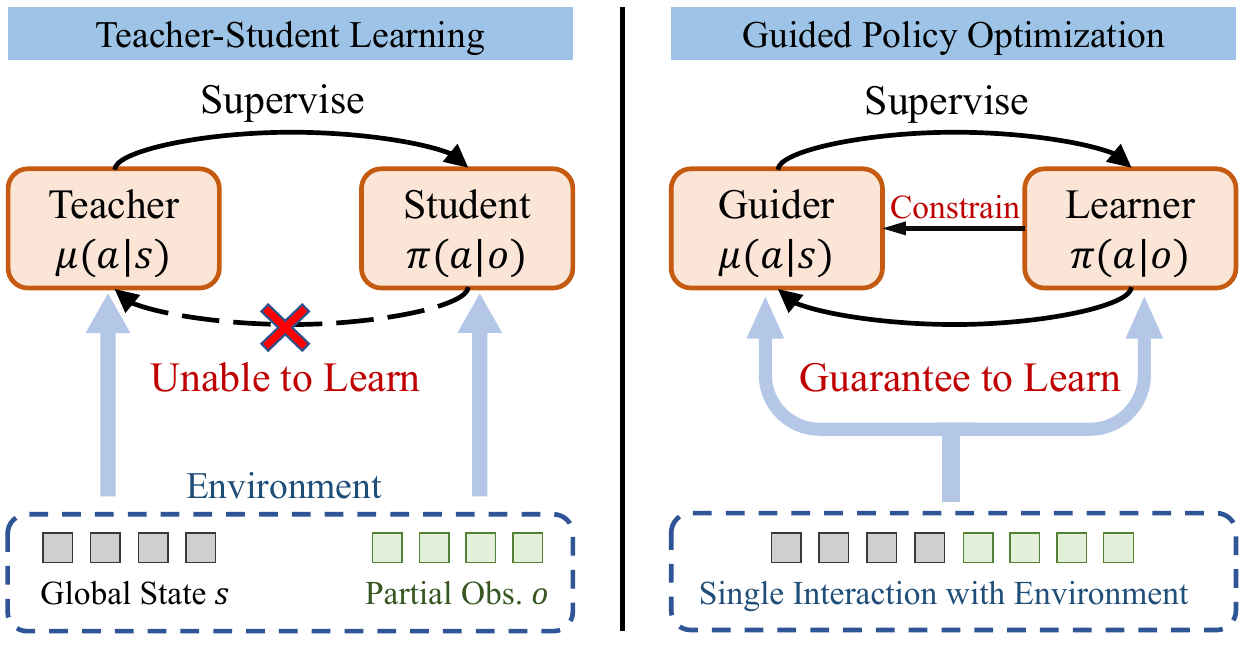}
\caption{The comparison between TSL and GPO.}
\label{fig:gpo}
\end{figure*}

In the learner training step, $D(\pi, \mu)$ can be any Bregman divergence. For this work, we use the KL divergence, weighted by the state distribution $d_\mu$.
GPO iterates these steps until convergence, applying standard RL to train the guider while the learner seeks to mimic the guider’s behavior. 
If the learner struggles due to discrepancies in observation spaces, the backtracking step adjusts the guider’s policy to mitigate the imitation gap.

The comparison between TSL and GPO is illustrated in Fig. \ref{fig:gpo}.
Several differences between the two frameworks exist.
First, the teacher in TSL is typically provided or trained independently from the student, while in GPO, the guider and learner are trained together. 
%This also means GPO does not require additional interaction with the environment.
Second, TSL typically allows the student to interact with the environment, whereas GPO only uses a guider, enabling more effective trajectory collection due to the behavioral policy being conditioned on privileged information.
Lastly, and most importantly, TSL does not use the student to constrain the teacher. 
This means that if the teacher is too advanced for the student, the student will struggle to learn from the teacher.

In contrast, GPO utilizes backtracking to guarantee the learner can effectively learn from the guider. This is demonstrated by the following proposition:
\begin{proposition}\label{prop}
If the guider's policy is updated using policy mirror descent in each GPO iteration:
\begin{equation*}
    \hat{\mu}=\text{arg}\min\{-\eta_k\langle\nabla V(\mu^{(k)}),\mu\rangle+\text{D}_{\mu^{(k)}}(\mu,\mu^{(k)})\},
\end{equation*}
where $\eta_k$ is the step size. Then the learner’s policy update follows a constrained policy mirror descent:
\begin{equation*}
    \pi^{(k+1)}=\underset{\pi\in\Pi}{\text{arg}\min}\{-\eta_k\langle\nabla V(\pi^{(k)}),\pi\rangle+\text{D}_{\pi^{(k)}}(\pi,\pi^{(k)})\}.
\end{equation*}
\end{proposition}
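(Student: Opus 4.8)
The statement is an instance of the classical fact that, for a Bregman divergence $D_\phi$, performing an \emph{unconstrained} mirror-descent step and then taking the Bregman projection onto a convex constraint set reproduces exactly the \emph{constrained} mirror-descent step. The plan is to instantiate this fact with the three quantities produced by one GPO iteration ($\mu^{(k)}$, the mirror step $\hat\mu$, and the projection onto $\Pi$), and then translate the result from the state space back to the observation space.

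First I would unpack the backtracking step. Since $\mu^{(k)}(\cdot\mid s)=\pi^{(k)}(\cdot\mid o)$ for every state $s$, the guider and the learner represent the same conditional distribution at iteration $k$; hence $V(\mu^{(k)})=V(\pi^{(k)})$ and the state/observation visitation measures agree, so the $d_\mu$-weighting in the learner objective $D(\pi,\hat\mu)$ is the same weighting implicit in $D_{\pi^{(k)}}$. Next I would write the first-order optimality condition of the guider's mirror-descent update: assuming $\hat\mu$ lies in the (relative) interior of the state-wise simplices, the generator $\phi$ satisfies $\nabla\phi(\hat\mu)=\nabla\phi(\mu^{(k)})+\eta_k\nabla V(\mu^{(k)})$.

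Then I would expand the learner step $\pi^{(k+1)}=\arg\min_{\pi\in\Pi}\mathbb{E}_{s\sim d_\mu}\big[D_\phi\big(\pi(\cdot\mid o(s)),\hat\mu(\cdot\mid s)\big)\big]$. Using $D_\phi(p,q)=\phi(p)-\phi(q)-\langle\nabla\phi(q),p-q\rangle$ together with the relation for $\nabla\phi(\hat\mu)$ above, all terms independent of $\pi$ can be discarded, leaving, up to an additive constant, $-\eta_k\langle\nabla V(\mu^{(k)}),\pi\rangle+D_\phi(\pi,\mu^{(k)})$ averaged over $s\sim d_\mu$. Grouping states by their observation and using $\mu^{(k)}(\cdot\mid s)=\pi^{(k)}(\cdot\mid o(s))$ rewrites the state-space average of $D_\phi(\pi,\mu^{(k)})$ as the observation-space divergence $D_{\pi^{(k)}}(\pi,\pi^{(k)})$, and the chain rule through the (linear) embedding $\Pi\hookrightarrow\{\text{state policies}\}$ identifies the linear term with $-\eta_k\langle\nabla V(\pi^{(k)}),\pi\rangle$. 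This is precisely the claimed constrained policy-mirror-descent update.

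The main obstacle is this last identification: showing that the gradient of the return with respect to the \emph{state} policy, contracted against feasible directions of $\Pi$ and summed over states sharing an observation, coincides with the gradient with respect to the \emph{observation} policy, and that the $d_\mu$ weights line up on both sides. This requires checking that $V$ and $\phi$ are evaluated consistently along the embedding of observation-policies into state-policies; it is also where the interior assumption on $\hat\mu$ is used, since without it the clean identity $\nabla\phi(\hat\mu)=\nabla\phi(\mu^{(k)})+\eta_k\nabla V(\mu^{(k)})$ must be replaced by a variational inequality and the projection argument redone with inequalities rather than equalities. The remaining manipulations are routine applications of the three-point identity for Bregman divergences.
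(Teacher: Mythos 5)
Your proposal is correct and follows essentially the same route as the paper's proof: identify the weighted KL as a Bregman divergence, use the backtracking to set $\mu^{(k)}=\pi^{(k)}$, apply the first-order optimality condition of the unconstrained mirror step, and recognize the learner update as the Bregman projection onto $\Pi$, which yields the constrained mirror-descent form. The extra care you flag (interiority of $\hat\mu$ and the state-vs-observation bookkeeping absorbed into the constraint class $\Pi$) is implicit rather than spelled out in the paper, but it does not change the argument.
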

\begin{proof}
    See Appendix \ref{app:proof}.
\end{proof}
Here, we assume that the guider $\mu$ has access to an unlimited policy class, while the learner $\pi$ is constrained to a limited policy class $\Pi$ for simplicity. 
Policy mirror descent \citep{DBLP:journals/corr/abs-2005-09814, JMLR:v23:22-0056} is a general family of algorithms that encompasses a wide range of fundamental methods in RL, including trust-region algorithms like TRPO \citep{DBLP:journals/corr/SchulmanLMJA15} and PPO \citep{DBLP:journals/corr/SchulmanWDRK17}.
This proposition shows that, despite the learner not directly interacting with the environment, the GPO update for the learner can be viewed as a standard RL update. Specifically, if we use trust-region RL algorithms for the guider, the update for the learner’s policy inherits the key properties, such as policy improvement \citep{DBLP:journals/corr/SchulmanLMJA15}.
This suggests that GPO can effectively address challenges in TSL, such as dealing with a suboptimal teacher or the imitation gap, while still framing the learner’s policy as being supervised by the guider. 
In Appendix \ref{app:tiger}, we provide an intuitive example illustrating how GPO can achieve optimal in the TigerDoor-alt problem.

Given that GPO mirrors direct RL for the learner, one may ask: \textbf{What are GPO’s key advantages?}
The main benefit lies in leveraging additional information while simplifying learning.
Since policy gradients suffer from high variance especially under partial observability, GPO splits learning into two phases: the guider with privileged information handles complex RL gradients, while the partial observable learner is trained via an easier supervised learning, reducing variance and complexity.
For instance, to train robustness to noisy observations, GPO can train the guider on clean inputs and supervise the learner with noisy ones, resulting in a more stable and effective learning process.

%\subsection{Implementation of GPO}\label{sec:3.2}
\subsection{GPO-Penalty}\label{sec:penalty}
This section introduces a straightforward implementation of the GPO framework using KL-divergence as a penalty for the guider, which we refer to as GPO-penalty.
Specifically, in step 2 of GPO, we use PPO as the underlying trust-region algorithm. 
The corresponding loss for the guider's policy is as follows\footnote{We omit subscripts for expectations in the remainder of the paper, as all samples are drawn from the distribution induced by the behavioral policy $\beta = \mu_{\text{old}}$.}:
\begin{equation}\label{eq:PPO1}
    \mathcal{L}_1(\mu)=-\mathbb{E}\bigg[\min
    \bigg(\rho^\mu A^\beta(s,a),
    \rho_{clip}^\mu A^\beta(s,a)\bigg)\bigg],
\end{equation}
where $\rho^\mu=\mu(a|s)/\beta(a|s)$, $\rho_{clip}^\mu=clip(\rho^\mu,1-\epsilon,1+\epsilon)$ and $\beta$ denotes the behavioral policy. The advantage $A^\beta(s,a)$ is estimated using the Generalized Advantage Estimation (GAE) \citep{Schulman2015HighDimensionalCC} with the value function $V(s)$ trained via discounted reward-to-go.

In step 3, since finding the exact minimizer of the distance measure is computationally prohibitive, we use gradient descent to minimize the BC objective: 
\begin{equation}
\mathcal{L}_2(\pi)=\mathbb{E}\big[\text{D}_{\text{KL}}\big(\mu(\cdot|s),\pi(\cdot|o)\big)\big].    
\end{equation}
Similarly, in step 4, we backtrack the guider's 
policy using the same BC loss: 
\begin{equation}
\mathcal{L}_3(\mu)=\mathbb{E}\big[\text{D}_{\text{KL}}\big(\mu(\cdot|s),\pi(\cdot|o)\big)\big].
\end{equation}

A key insight in GPO is that exact backtracking of the guider’s policy is unnecessary—it's sufficient to keep the guider within an imitable region relative to the learner.
The learner may fail to follow the guider either because the guider is inimitable or because the guider learns faster, the latter being common due to inexact gradient updates.
In such cases, aggressive backtracking can be harmful.
Keeping the guider slightly ahead also allows it to collect better trajectories, as discussed in Section \ref{sec:exp4}.
To maintain this balance, we introduce a coefficient $\alpha$ that modulates the guider’s loss as 
\begin{equation}
    \mathcal{L}(\mu)=\mathcal{L}_1(\mu)+\alpha \mathcal{L}_3(\mu),
\end{equation}
where $\alpha$ is adapted based on the distance $L_3(\mu)$ relative to a threshold $d$, using a constant scaling factor $k$: 
\begin{align}\label{eq:alpha}
    \alpha=k\alpha \ \ \text{if}\ \ \mathcal{L}_3(\mu)>kd,\ \ \alpha/k\ \ \text{if}\ \ \mathcal{L}_3(\mu) < d/k.
\end{align}
This scheme is analogous to the KL-penalty adjustment in PPO-penalty \citep{DBLP:journals/corr/SchulmanWDRK17}, where the penalty coefficient adjusts based on the relationship between the KL divergence and a predefined threshold.

Another aspect is compensating for the learner’s policy improvement, as we replace strict backtracking with a KL constraint. 
While it is possible to set a very small $d_{\text{targ}}$, this would inefficiently inflate $\alpha$, hindering the guider's training.
Notably, Proposition \ref{prop} implies that applying GPO with PPO is effectively equivalent to applying PPO directly to the learner. Consequently, we can concurrently train the learner's policy using PPO during the GPO iterations.
As a result, we introduce an additional objective for the learner's policy:
\begin{equation}\label{eq:PPO2}
    \mathcal{L}_4(\pi)=-\mathbb{E}\bigg[\min
    \bigg(\rho^\pi A^\beta(s,a),
    \rho_{clip}^\pi A^\beta(s,a)\bigg)\bigg],
\end{equation}
where $\rho^\pi=\pi(a|o)/\beta(a|s)$.
Considering that the behavioral policy is from the guider, to validate this update, we introduce the following proposition:
\begin{proposition}
    For policy $\pi$, $\mu$, $\beta$ and all states $s$, suppose $\text{D}_\text{TV}(\mu(\cdot|s),\beta(\cdot|s))\lesssim\epsilon/2$, then we have
    \begin{equation*}
        \mathbb{E}_{a\sim\beta}\big[|1-\rho^\pi(s,a)|\big]\lesssim\epsilon+\sqrt{2d_{targ}}.
    \end{equation*}
\end{proposition}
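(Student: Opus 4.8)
The plan is to observe that the quantity on the left is, up to a factor of $2$, an $L^1$ distance between two action distributions — hence a total-variation distance — and then to bound that distance by inserting the guider $\mu(\cdot|s)$ as an intermediate point and splitting with the triangle inequality.

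First I would rewrite the expectation directly. Since $\rho^\pi(s,a)=\pi(a|o)/\beta(a|s)$ with $o$ the observation attached to $s$,
\begin{equation*}
\mathbb{E}_{a\sim\beta}\big[|1-\rho^\pi(s,a)|\big]=\sum_a\beta(a|s)\Big|1-\frac{\pi(a|o)}{\beta(a|s)}\Big|=\sum_a\big|\beta(a|s)-\pi(a|o)\big|=2\,\text{D}_\text{TV}\big(\beta(\cdot|s),\pi(\cdot|o)\big).
\end{equation*}
Next, apply the triangle inequality for total variation through $\mu(\cdot|s)$:
\begin{equation*}
\text{D}_\text{TV}\big(\beta(\cdot|s),\pi(\cdot|o)\big)\le\text{D}_\text{TV}\big(\beta(\cdot|s),\mu(\cdot|s)\big)+\text{D}_\text{TV}\big(\mu(\cdot|s),\pi(\cdot|o)\big).
\end{equation*}
The first term is at most $\epsilon/2$ by hypothesis. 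For the second term I would invoke the learner-training guarantee: the backtracking/KL-penalty mechanism keeps the learner inside the target divergence ball of the guider, i.e. $\text{D}_\text{KL}\big(\mu(\cdot|s),\pi(\cdot|o)\big)\le d_{targ}$ (this is exactly what the threshold $d=d_{targ}$ and the adaptive coefficient $\alpha$ in \eqref{eq:alpha} enforce). Pinsker's inequality then yields $\text{D}_\text{TV}\big(\mu(\cdot|s),\pi(\cdot|o)\big)\le\sqrt{d_{targ}/2}$. Combining the three displays gives
\begin{equation*}
\mathbb{E}_{a\sim\beta}\big[|1-\rho^\pi(s,a)|\big]\le 2\big(\epsilon/2+\sqrt{d_{targ}/2}\big)=\epsilon+\sqrt{2d_{targ}},
\end{equation*}
which is the claimed bound; the $\sqrt{2}$ simply comes from $2\sqrt{d_{targ}/2}=\sqrt{2d_{targ}}$.

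The one place I would be careful — and the only real subtlety — is the bookkeeping around which divergence is controlled and in what sense. The GPO loss $\mathcal{L}_3$ is a \emph{state-averaged} KL, so the per-state bound $\text{D}_\text{KL}(\mu(\cdot|s),\pi(\cdot|o))\le d_{targ}$ used above should either be read as an assumption of the proposition (a per-state imitability condition) or the conclusion should be stated in expectation over $s$, in which case one applies Jensen before Pinsker. Relatedly, the ``$\lesssim$'' in both the hypothesis and the conclusion absorbs the gap between the exact minimizer of the imitation objective and the actual gradient-descent iterate, so there is no need to quantify optimization error. Everything else is the routine Pinsker-plus-triangle chain, so I do not anticipate any genuine obstacle beyond stating these conventions precisely.
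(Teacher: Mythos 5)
Your proof is correct and follows essentially the same route as the paper's: rewrite $\mathbb{E}_{a\sim\beta}[|1-\rho^\pi(s,a)|]$ as $2\,\text{D}_\text{TV}(\pi(\cdot|o),\beta(\cdot|s))$, split via the triangle inequality through $\mu(\cdot|s)$, and bound the $\mu$--$\pi$ term by Pinsker's inequality using the $d_{targ}$ KL constraint. Your remark about the per-state versus state-averaged KL (and the ``$\lesssim$'' absorbing optimization slack) is a fair precision that the paper itself glosses over, but it does not change the argument.
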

\begin{proof}
    See Appendix \ref{app:proof}.
\end{proof}
The assumption on total variation distance is justified by the PPO update of the guider’s policy (Appendix \ref{app:proof}). This proposition implies that when $d_{\text{targ}}$ is small, the behavioral policy closely matches the learner’s policy, allowing valid sample reuse for learner training.

Finally, we define the merged learner objective for the learner as: 
\begin{equation}
    \mathcal{L}(\pi)=\alpha \mathcal{L}_4(\pi)+\mathcal{L}_2(\pi),
\end{equation}
where the coefficient $\alpha$ from \eqref{eq:alpha} is applied to the RL term. This mechanism compensates when the learner struggles to follow the guider. 
If the learner is able to fully track the guider, $\alpha$ approaches zero, allowing the guider to directly lead the learner to the optimal policy without requiring an additional RL objective. When the learner cannot keep pace, the RL objective aids in the learner’s training.

%\subsection{Refinements of GPO}\label{sec:refine}
\subsection{GPO-Clip}\label{sec:clip}
In this section, we introduce a slightly modified implementation of the GPO framework, which we refer to as GPO-clip. 
The key principle is that an effective guider should remain at the boundary of the learner’s imitable region: if the guider is too far ahead, the learner struggles to follow; if too close, the guider’s ability to provide effective supervision and better trajectory diminishes.
To achieve this balance, the guider should halt updates when it moves too far ahead and avoid backtracking when it is already sufficiently close.

We propose two modifications to the GPO-penalty algorithm introduced in the previous subsection.
First, inspired by PPO-clip, we replace the clip function $\rho_{clip}^\mu$ in \eqref{eq:PPO1} with the following double-clip function:
\begin{equation}\label{eq:clip}
    \rho_{clip}^{\mu,\pi}=\text{clip}\bigg(\text{clip}(\frac{\mu(a|s)}{\pi(a|o)},1-\delta,1+\delta)\cdot\frac{\pi(a|o)}{\beta(a|s)},1-\epsilon,1+\epsilon\bigg).
\end{equation}
This formulation introduces an additional inner clipping step, which halts the guider’s updates under two conditions:
(1) $A^\beta(s,a)>0$ and $\mu(a|s)>\pi(a|o)(1+\delta)$, 
(2) $A^\beta(s,a)<0$ and $\mu(a|s)<\pi(a|o)(1-\delta)$.
Considering that the positive (negative) advantage indicates that $\mu(a|s)$ is set to increase (decrease), the double-clip function prevents further movement away from $\pi$ when $\mu$ is already distant.

It is important to note that, unlike PPO where PPO-clip can completely replace the KL-penalty term, this is not the case in GPO.
In PPO, the ratio $\rho^\pi(s,a)$ starts at 1 at the beginning of each epoch, ensuring that the clipped ratio keeps $\pi$ near the behavioral policy.
In GPO, however, the gap between $\pi(a|s)$ and $\mu(a|o)$ may accumulate over multiple updates if the learner fails to keep up with the guider.
The double-clip function \eqref{eq:clip} alone is insufficient to bring $\pi(a|o)$ back into the $\delta$ region once it has strayed too far.
To address this, we introduce a mask on the backtracking loss, defined as: 
\begin{equation}
   m(s,a)=\mathbb{I}\big(\frac{\mu(a|s)}{\pi(a|o)}\notin(1-\delta,1+\delta)\big) ,
\end{equation}
where $\mathbb{I}$ is the indicator function.
This mask replaces the adaptive coefficient $\alpha$ of GPO-penalty, selectively applying the backtracking penalty only when $\mu(a|o)$ drifts outside the $\delta$ region. Policies that remain close to each other are left unaffected, preventing unnecessary backtracking.

Additionally, given that both the guider and learner are solving the same task, their policies should exhibit structural similarities. To leverage this, we allow the guider and learner to share a single policy network. To distinguish between guider and learner inputs, we define a unified input format:
the input to the guider’s policy is defined as $o_g = [s,o,1]$, where $s$ is the state, $o$ is the partial observation, and the scalar 1 serves as an indicator; the learner’s input is defined as $o_l = [\vec{0},o,0]$, where $\vec{0}$ is a zero vector with the same dimensionality as $s$, indicating that the learner has access only to the partial observation $o$.
This approach is applied to both GPO-penalty and GPO-clip, and the update for the shared policy network with parameters $\theta$ is as follows:
\begin{equation}\label{eq:gpo-p}
\begin{aligned}
    L_{\text{GPO-penalty}}(\theta)=\mathbb{E}\Big[&-\min
    \Big(\rho^{\mu_\theta}A^\beta(o_g,a),
    \rho_{clip}^{\mu_\theta}A^\beta(o_g,a)\Big)+\alpha\text{D}_{\text{KL}}\big(\mu_\theta(\cdot|o_g),\pi_{\hat{\theta}}(\cdot|o_l)\big)\\
    &-\alpha\min
    \Big(\rho^{\pi_\theta}A^\beta(o_l,a),
    \rho_{clip}^{\pi_\theta}A^\beta(o_l,a)\Big)+\text{D}_{\text{KL}}\big(\mu_{\hat{\theta}}(\cdot|o_g),\pi_\theta(\cdot|o_l)\big)
    \Big],
\end{aligned}
\end{equation}
\begin{equation}\label{eq:gpo-c}
\begin{aligned}
    L_{\text{GPO-clip}}(\theta)=\mathbb{E}\Big[&-\min
    \Big(\rho^{\mu_\theta}A^\beta(o_g,a),
    \rho_{clip}^{\mu_\theta,\pi_{\hat{\theta}}}A^\beta(o_g,a)\Big)+m(s,a)\text{D}_{\text{KL}}\big(\mu_\theta(\cdot|o_g),\pi_{\hat{\theta}}(\cdot|o_l)\big)\\
    &-\alpha\min
    \Big(\rho^{\pi_\theta}A^\beta(o_g,a),
    \rho_{clip}^{\pi_\theta}A^\beta(o_g,a)\Big)+\text{D}_{\text{KL}}\big(\mu_{\hat{\theta}}(\cdot|o_g),\pi_\theta(\cdot|o_l)\big)
    \Big],
\end{aligned}
\end{equation}
where $\hat{\theta}$ denotes a stop-gradient operation on the parameters, and $\alpha$ for GPO-clip is a fixed parameter. The detailed algorithms are summarized in Appendix \ref{app:alg}.

\section{Experiments}
In this section, we evaluate the empirical performance of GPO across various domains.
For baselines, we consider two types of approaches for utilizing teacher supervision.
The first type involves training both the teacher and student simultaneously. A summary of their main characteristics is provided in Table \ref{tab:alg}. Among these, \textbf{GPO-naive} refers to GPO-penalty without the RL auxiliary loss. \textbf{PPO-asym} directly trains the learner using PPO, 
\begin{wraptable}{r}{0.55\textwidth}
    \caption{Co-training algorithms.} 
    \label{tab:alg}
    \setlength{\tabcolsep}{2pt}
    \begin{tabular}{c|cccc}
        \toprule
        Algorithm & Train $\mu$ & \thead{Behavioral\\policy} & Train $\pi$ & \thead{Value \\ function} \\
        \midrule
        PPO     & -   & $\pi(a|o_l)$ & PPO     & $V(o_l)$ \\
        PPO-asym  & -   & $\pi(a|o_l)$ & PPO     & $V(o_g)$ \\
        PPO+BC  & PPO & $\mu(a|o_g)$ & BC & $V(o_g)$ \\
        A2D     & PPO & $\pi(a|o_l)$ & BC & $V(o_l)$ \\
        ADVISOR-co & PPO & $\pi(a|o_l)$ & BC+PPO & $V(o_l)$ \\
        GPO-naive  & PPO & $\mu(a|o_g)$ & BC & $V(o_g)$ \\
        GPO-penalty& PPO & $\mu(a|o_g)$ & BC+PPO & $V(o_g)$ \\
        GPO-clip   & PPO & $\mu(a|o_g)$ & BC+PPO & $V(o_g)$ \\
        GPO-ablation & PPO & $\mu(a|o_g)$ & PPO & $V(o_g)$ \\
        \bottomrule
    \end{tabular}
\end{wraptable}
with the learner’s value function receiving $o_g$ as input. \textbf{PPO+BC} trains the teacher with PPO, while the learner is trained via direct BC from the teacher. \textbf{ADVISOR-co} is a modification of ADVISOR \citep{10.5555/3540261.3541724}, and \textbf{A2D} is based on the work by \citep{Warrington2020RobustAL}.
The second type involves training the teacher first, followed by the application of TSL methods. These include \textbf{DAgger} \citep{ross2011reduction}, \textbf{PPO+BC-t}, \textbf{ADVISOR}, \textbf{ELF} \citep{Walsman2023ImpossiblyGE}, and \textbf{ELF-asym}, where ELF is a policy distillation method that utilizes reward shaping to provide supervision to the student, and ELF-asym is a variant that uses an asymmetric value function.
Further details about these algorithms can be found in Appendix \ref{app:base}.

\subsection{Didactic Tasks}
\label{sec:exp1}

\begin{figure}[ht]
\centering
% === 右侧图像 ===
\begin{minipage}[ht]{0.7\textwidth}
    \centering
    \includegraphics[width=\linewidth]{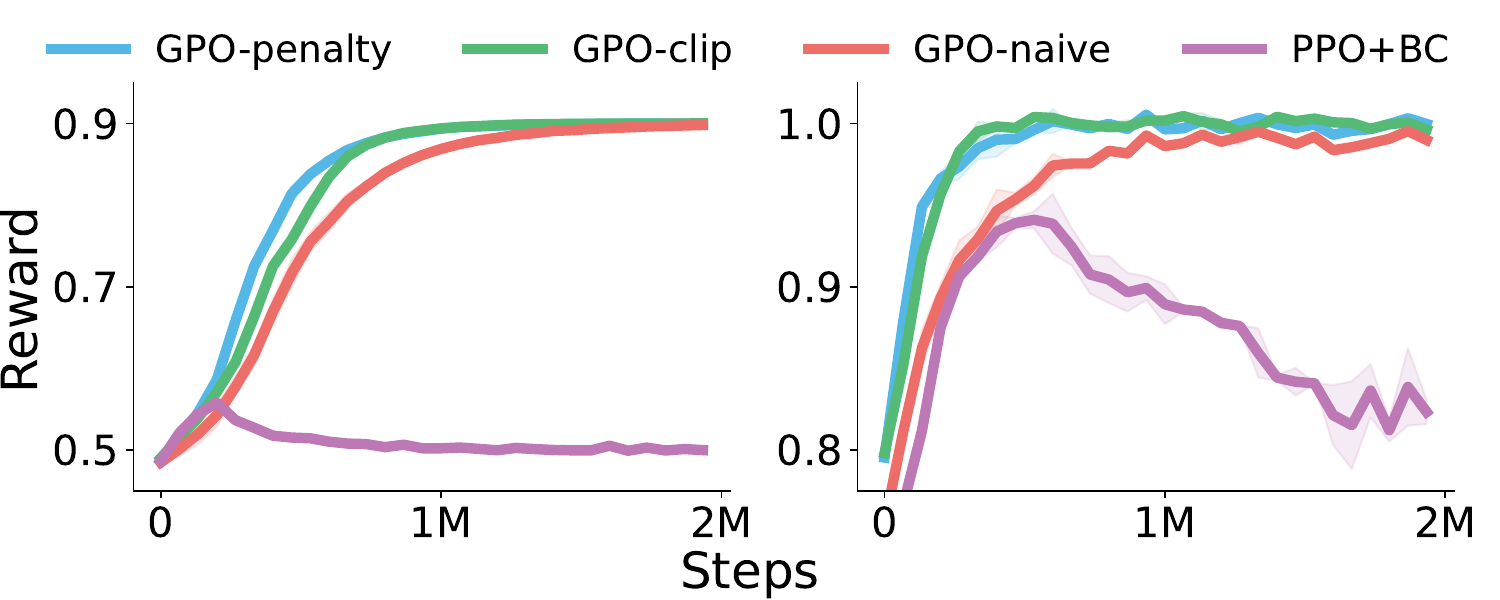}
    \captionof{figure}{Results on TigerDoor (left) and TigerDoor-alt (right).}
    \label{fig:dida}
\end{minipage}
\end{figure}

We begin by evaluating our algorithm on two didactic problems introduced in Section \ref{sec:dida}.
As shown in Fig. \ref{fig:dida}, direct cloning of the guider’s policy converges to a suboptimal solution, as expected.
In contrast, all variants of GPO achieve optimal performance on these tasks.
Although applying RL directly to the learner easily leads to optimal solutions, it is important to note that GPO-naive achieves optimality purely through supervised learning. 
This result verifies the optimality guarantee of the GPO framework described in Proposition \ref{prop}, suggesting that a guider constrained within the learner's imitable region can provide effective supervision, even with asymmetric information.
Moreover, comparing GPO-naive to GPO-penalty and GPO-clip reveals that the introduction of direct RL training for the learner accelerates learning.
% Moreover, as shown in Fig. \ref{fig:dida}(c), the optimality of GPO-naive is robust to variations in the KL-threshold, offering flexibility to adjust the distance between the guider and learner across different tasks.

\subsection{Continuous Control Tasks in Brax}
\label{sec:exp2}

In this subsection, we present the results of our algorithms and baselines on several continuous control tasks in the Brax domain.
To transform these tasks into a POMDP setting, we remove the velocity information of all joints, and add varying levels of noise to the observations. The guider has access to full, noiseless information, while the learner operates with partial and noisy inputs.
For more details, please refer to Appendix \ref{app:exp}.

\begin{figure*}[t]
\centering
\includegraphics[width=0.99\linewidth]{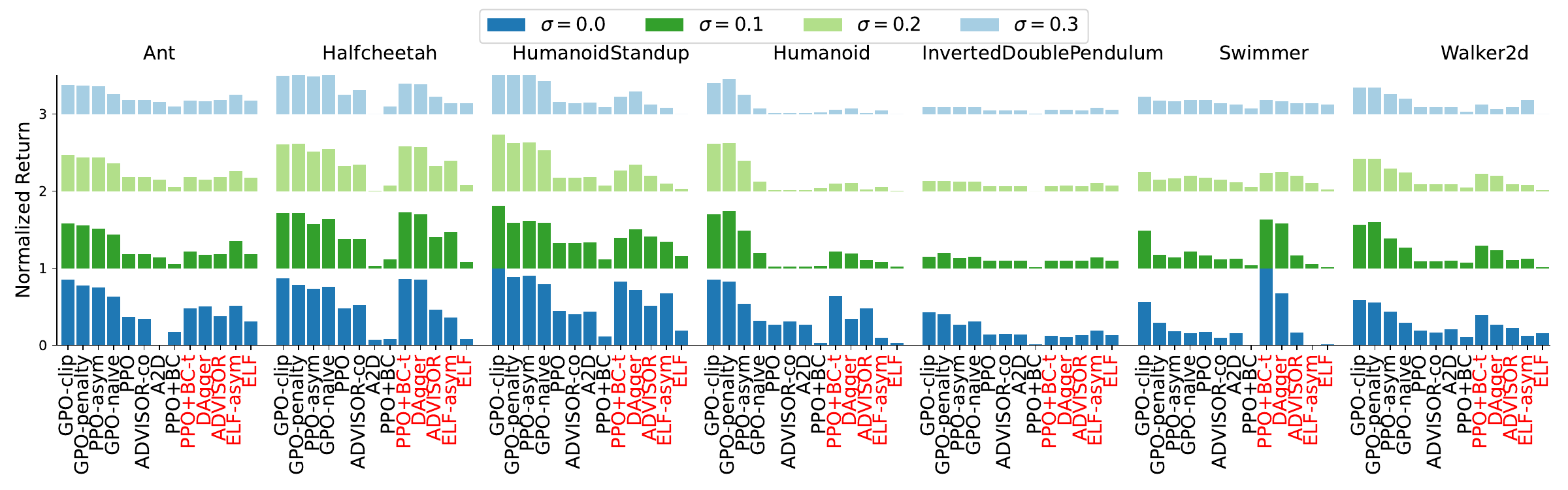}
  \caption{Comparison of GPO and baselines on the Brax domain, where $\sigma$ represents the scale of Gaussian noise added to the observations. The performance on each task is normalized to $[0,1]$ using the performance of the corresponding pre-trained teacher as a reference. Algorithms highlighted in red rely on a separately pre-trained privileged teacher.
  }
\label{fig:mujoco}
\end{figure*}

The results are shown in Fig. \ref{fig:mujoco}, where the performance hierarchy is generally: GPO-clip $>$ GPO-penalty $>$ PPO-asym $>$ GPO-naive $>$ other baselines. 
It is important to note that, even without factoring in the cost of training the teacher (which is comparable to training GPO from scratch), methods that rely on a pre-trained privileged teacher perform well only in the \textit{Halfcheetah} and \textit{Swimmer} tasks. 
Furthermore, the performance of these methods declines rapidly as the noise scale increases. 
This occurs because, when the pre-trained teacher becomes too skilled for the student, it provides little to no useful supervision, and may even have a negative impact on learning.

For co-training approaches, we have the following observations:
First, the superior performance of GPO-clip and GPO-penalty compared to the base algorithm PPO shows that this framework can effectively utilize additional information during training to facilitate the learner training.
Second, comparing GPO-naive to GPO-penalty and GPO-clip, we see that introducing RL training for the learner improves performance.
Third, the comparison between PPO+BC and GPO-naive highlights the necessity of backtracking. 
If the guider is not constrained to the learner, the guider’s supervision may negatively influence the performance.
Last, other baselines such as ADVISOR failed to utilize the privileged teacher such that it degenerates into pure PPO.

In summary, our method consistently outperforms the baselines, demonstrating its effectiveness in solving noisy and partially observable continuous control tasks. 
Additional experiments including L2T-RL \citep{wu2024learnteachimprovesample}, TGRL \citep{shenfeld2023tgrl} and RMA \citep{kumar2021rma} are provided in Appendix \ref{app:teacher}.

\subsection{Memory-based Tasks in POPGym}\label{sec:exp3}

Since using memory models to deal with POMDP is a common practice, we evaluate GPO in POPGym to show whether the algorithm can effectively address memory-based tasks.
The tasks include card and board games where agents must recall previous observations to extract useful information for decision-making. 
For these tasks, the guider’s observation is designed to include the critical information needed to remember, theoretically minimizing the imitation gap as long as the memory model can store the necessary information.
Although in practice, memory models struggle to retain all information, especially in complex tasks, this setup allows us to use a larger KL threshold or clipping parameter, enabling the guider to explore further and provide more valuable supervision.
Further details on the experimental settings are provided in Appendix \ref{app:exp}.

\begin{figure}[ht]
\centering
\includegraphics[width=0.99\linewidth]{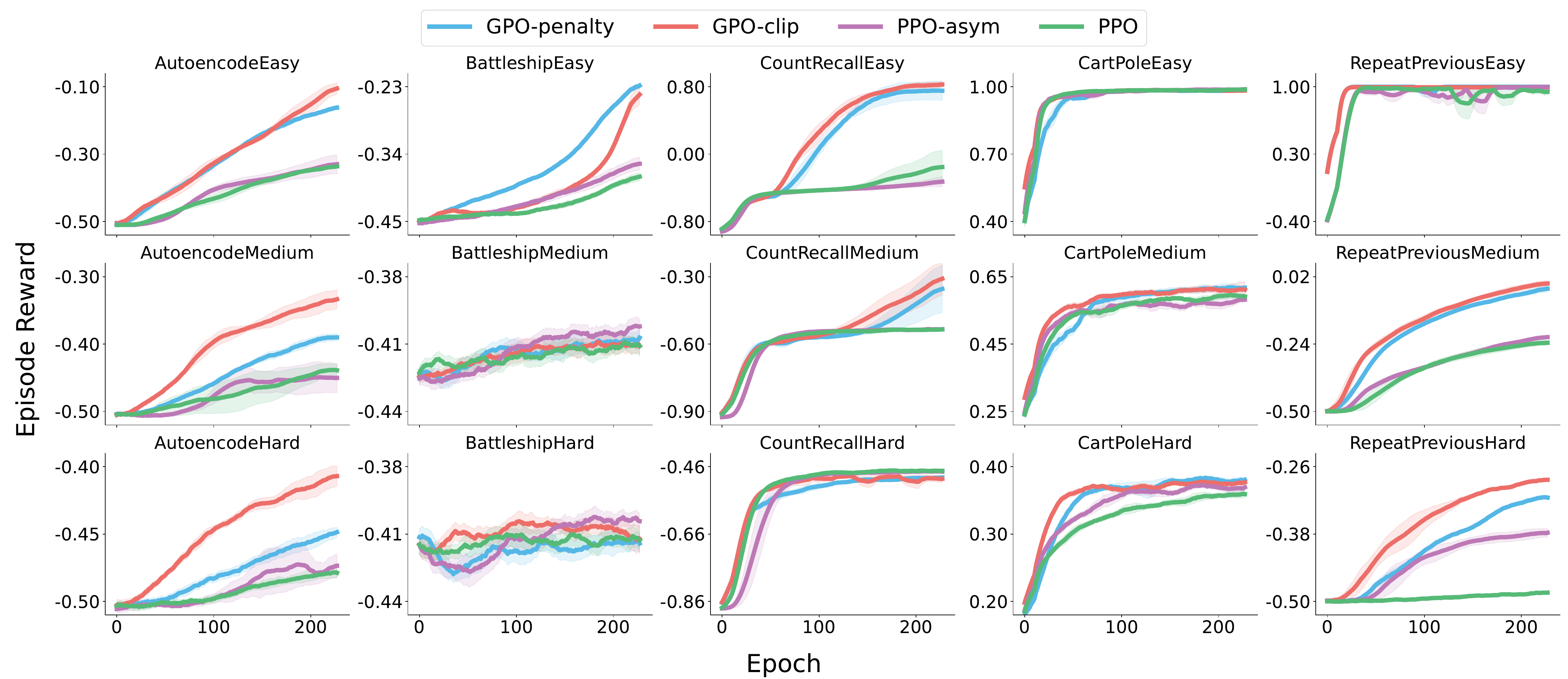}
\caption{The results of GPO-clip, GPO-penalty, PPO-asym, and PPO on 15 POPGym tasks.}\label{fig:pop}
\end{figure}

Fig. \ref{fig:pop} shows the results on 15 POPGym tasks, where we compare GPO-penalty and GPO-clip to PPO-asym and PPO.
The general conclusion mirrors the results from the previous subsection, where GPO-clip typically outperforms GPO-penalty, followed by PPO-asym and PPO. Key insights include:
First, the superior performance of GPO-penalty indicates that the ability of the guider to explore further without diverging too much from the learner proves valuable in these memory-based tasks.
Second, while PPO-asym outperforms PPO, its performance improvement is less pronounced here than in the Brax domain, suggesting that asymmetric value function may not be very helpful for memory tasks.
Third, although neither GPO-penalty nor GPO-clip exhibits superior performance in tasks like \textit{BattleshipMedium} and \textit{CountRecallHard}, this is due to the fact that we use the same parameter across all tasks, and performance could be improved as we show in the next section.

Overall, our methods demonstrate strong performance across the majority of tasks, providing an effective solution for memory-based problems.

\subsection{Ablations and Discussions}\label{sec:exp4}

In this section, we dive deeper into GPO's performance through ablations and further discussions.

\begin{figure}[ht]
\begin{minipage}[t]{0.48\textwidth}
  \begin{subfigure}{0.96\linewidth}
  \setlength{\abovecaptionskip}{5pt}
        \centering
    \includegraphics[width=1\linewidth]{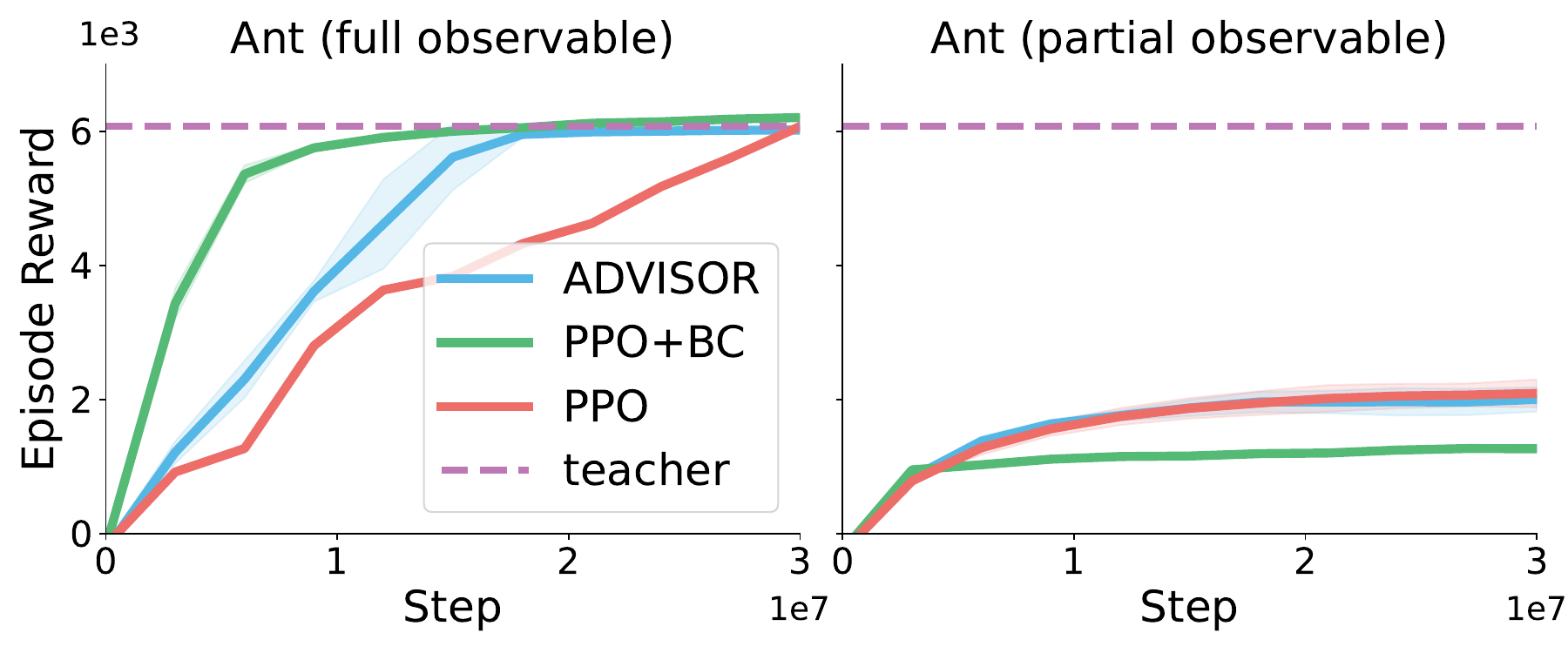}
  \end{subfigure}
  \caption{ADVISOR and PPO+BC with a pre-trained teacher.}\label{fig:teacher}
\end{minipage}
\hspace{2mm}
\begin{minipage}[t]{0.49\textwidth}
\begin{subfigure}{0.49\linewidth}
  \setlength{\abovecaptionskip}{5pt}
        \centering
    \includegraphics[width=1\linewidth]{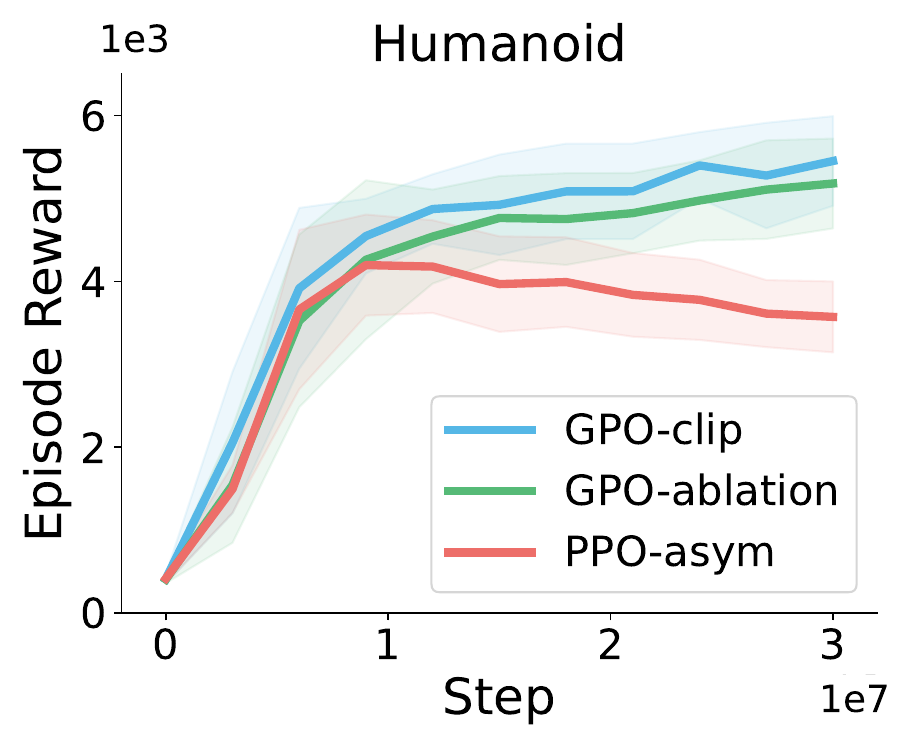}
  \end{subfigure}
  \begin{subfigure}{0.49\linewidth}
  \setlength{\abovecaptionskip}{5pt}
        \centering
    \includegraphics[width=1\linewidth]{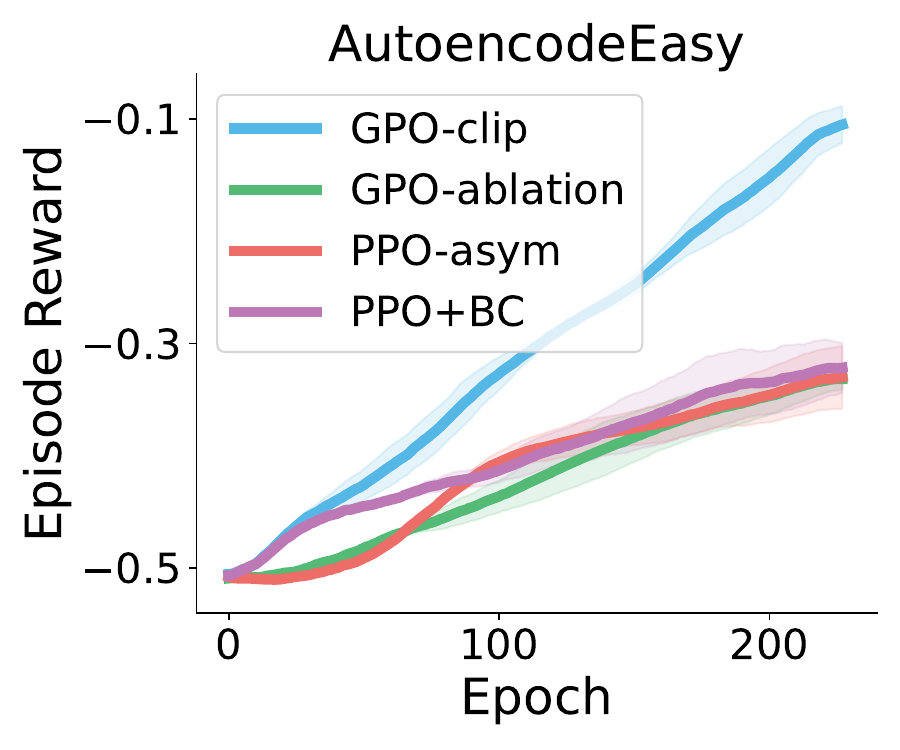}
  \end{subfigure}
  \caption{Ablation studies.}\label{fig:abl}
\end{minipage}
\end{figure}

\textbf{Why does training a teacher first and applying TSL methods often fail?}
A representative example is the TigerDoor problem, where a pre-trained teacher provides minimal to no effective supervision for the student.
Recent TSL approaches, such as ADVISOR and TGRL, address the challenge of an overly optimal teacher by reverting to pure RL, thereby bypassing uninformative or misleading supervision.
As shown in Fig. \ref{fig:teacher}, although ADVISOR and PPO+BC perform well in the fully observable \textit{Ant} task where the teacher is trained, it degenerates into PPO in the partially observable \textit{Ant} task since the teacher is found inimitable.

\textbf{Why do GPO outperform other baselines?}
We attribute the superior performance to two factors: effective RL training of the learner, and effective supervision from the guider.
The benefit of RL training is shown in Fig. \ref{fig:abl}(left), where GPO-ablation (GPO-penalty without supervision, as described in Table \ref{tab:alg}) outperforms PPO-asym on the \textit{Humanoid} task.
Although both use similar objectives, GPO-ablation uses data collected by the guider, indicating that a better behavior policy improves learning efficiency.
The effectiveness of the supervision comes from the guider being constrained to the imitable region while still learning rapidly. 
In Fig. \ref{fig:abl}(right), with the learner trained purely by supervision (GPO-clip with RL disabled), GPO-clip outperforms GPO-ablation, PPO+BC, and PPO-asym.
This shows that in memory-intensive tasks, supervision is more beneficial than RL.
Since PPO+BC performs poorly in noisy tasks in Section \ref{sec:exp2} but comparably to PPO-asym here, we can also infer that supervision plays a particularly important role in these tasks.
Moreover, GPO-clip’s strong performance over PPO+BC—despite both using pure supervision—highlights the importance of constraining the guider to a policy the learner can follow.

\begin{figure}[ht]
\centering
\begin{subfigure}{0.49\columnwidth}
  \setlength{\abovecaptionskip}{5pt}
        \centering
\includegraphics[width=0.95\linewidth]{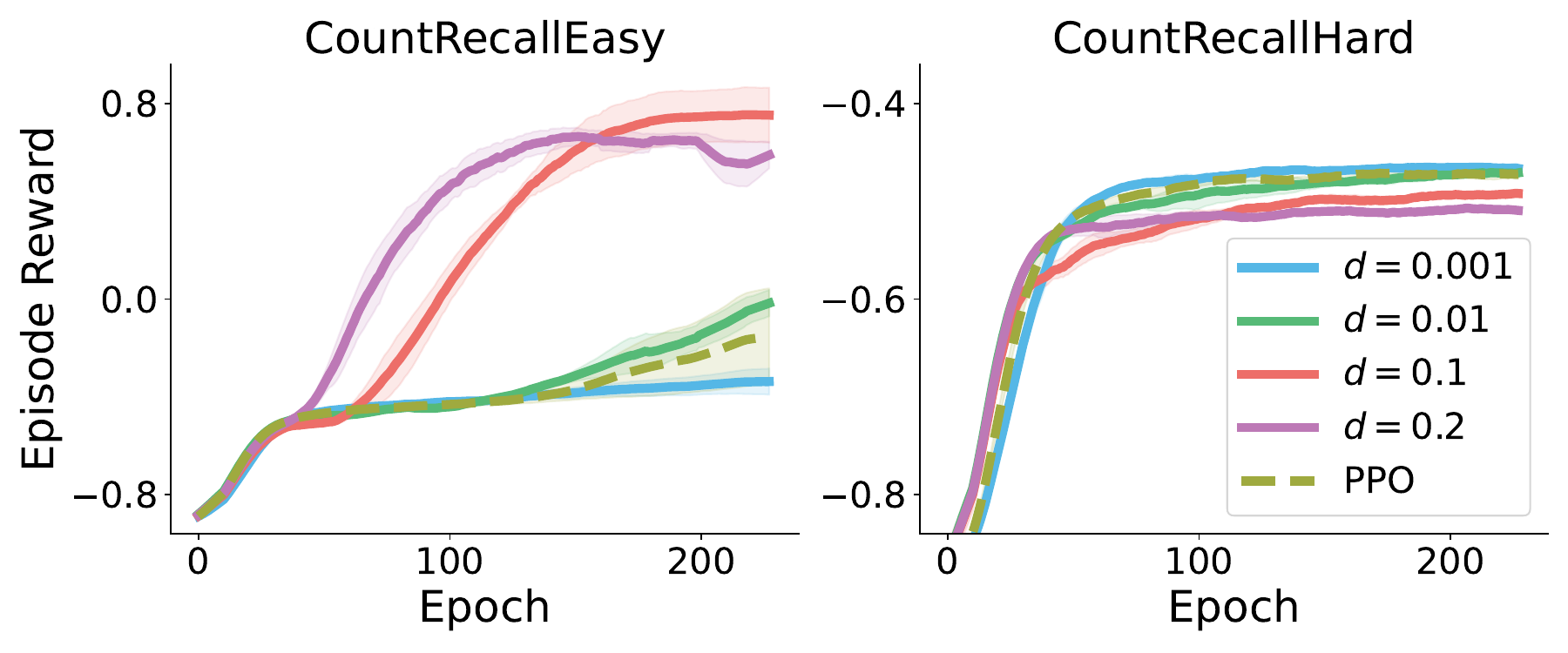}
\caption{GPO-penalty with different KL-threshold.}
  \end{subfigure}
\begin{subfigure}{0.49\columnwidth}
  \setlength{\abovecaptionskip}{5pt}
        \centering
\includegraphics[width=0.95\linewidth]{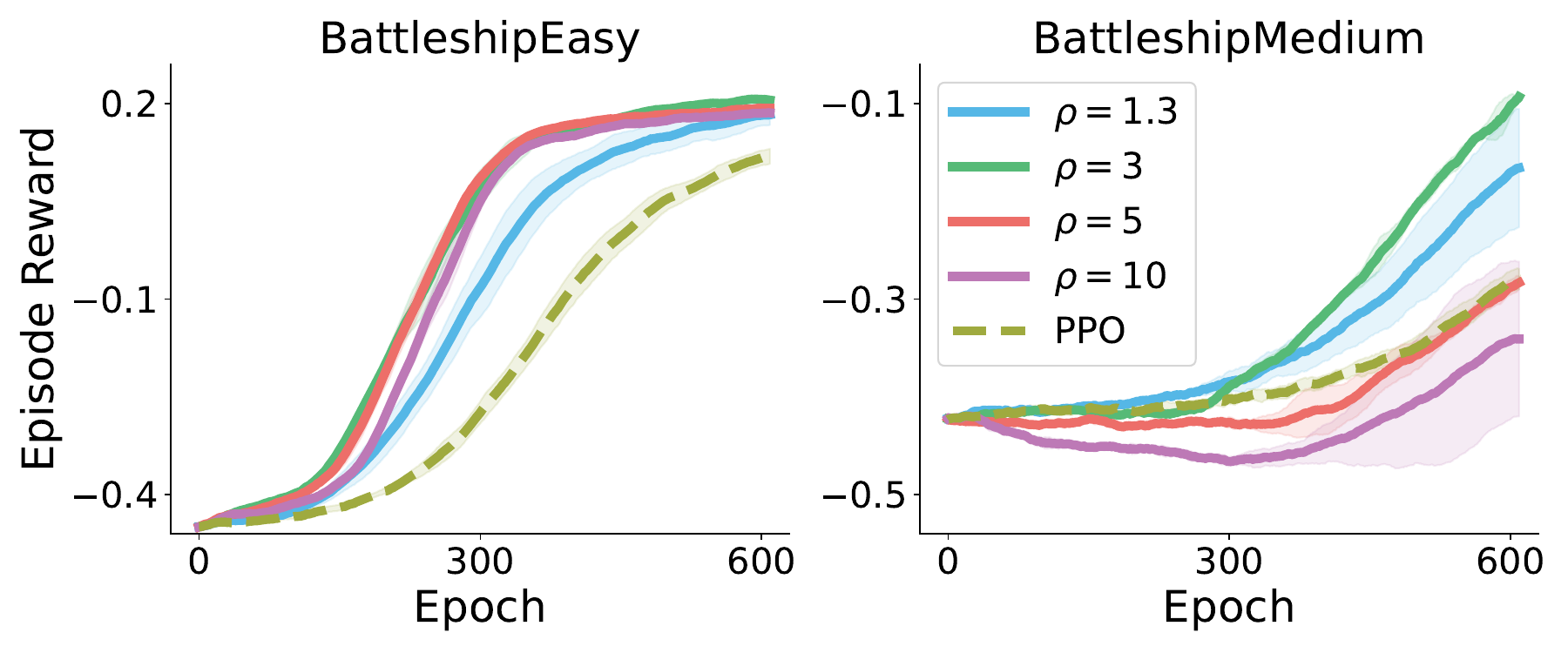}
\caption{GPO-clip with different clip parameters.}
  \end{subfigure}
  \caption{The results of GPO-penalty and GPO-clip with different hyperparameters. The clip parameter $\rho$ is defined in Appendix \ref{sec:hyper}.}\label{fig:hyper}
\end{figure}

\textbf{When does GPO fail}? 
GPO can fail when the guider learns too slowly, often due to inadequate information.
Another failure mode arises from poorly tuned KL thresholds (clip parameters).
For instance, in the \textit{CountRecallHard} task from POPGym, both GPO variants underperform compared to PPO and PPO-asym. 
As shown in Fig. \ref{fig:hyper}, larger KL thresholds help in simple tasks like \textit{CountRecallEasy} and \textit{BattleshipEasy}, but hurt performance in harder ones like \textit{CountRecallHard} and \textit{BattleshipMedium}. 
This is because challenging tasks strain memory models like GRU—when GRU fails to retain key information, the learner cannot follow the guider.
In such cases, a large KL threshold pushes the guider beyond the learner’s reachable region, causing an unrecoverable imitation gap.

\section{Conclusion and Future work}
In this paper, we introduce GPO, a method designed to leverage additional information in POMDPs during training. Our experimental results demonstrate that the proposed algorithm effectively addresses noisy and memory-based partially observable tasks, offering a novel approach to utilizing auxiliary information for more efficient learning.
Future work could explore extending guided policy optimization to the multi-agent setting, where agents often have access to global information during training but are constrained to local observations during execution. 

\subsubsection*{Acknowledgments}
This work was supported in part by the National Natural Science Foundation of China [grant numbers U22A2062, U23B2037, 12272008, 62450001, 62476008].

\bibliography{iclr2026_conference}
\bibliographystyle{iclr2026_conference}

\appendix

\section*{The Use of Large Language Models (LLMs)}
LLMs are used to polish the paper writing.

\section{Related Works}\label{app:rw}
Although leveraging historical information has proven effective for solving POMDPs \citep{igl2018deep,meng2021memory,liu2022partially}, additional information—often available in simulators during training—can be exploited to further aid learning. 
Leveraging additional information to accelerate learning in POMDPs has been explored across various frameworks and application domains \citep{vapnik2009new, lambert2018deep,lee2023learning}.
A prominent line of research focuses on Imitation Learning (IL), where expert knowledge, often equipped with extra information, significantly enhances performance in practical domains like autonomous driving \citep{bansal2018chauffeurnet,de2019causal} and robot navigation and planning \citep{choudhury2017adaptive,bhardwaj2017learning}.
However, traditional IL methods such as Behavioral Cloning (BC) \citep{pomerleau1991efficient,torabi2018behavioral} and DAgger \citep{ross2011reduction} often lead to sub-optimal solutions in scenarios requiring active information gathering by the agent \citep{pinto2018asymmetric,Warrington2020RobustAL}. 

To overcome these limitations, recent research has focused on hybrid approaches that integrate RL with IL, often in the context of policy distillation \citep{pmlr-v89-czarnecki19a}.
For instance, 
\citep{nguyen2022leveraging} modifies Soft Actor Critic (SAC) \citep{haarnoja2018soft} by replacing the entropy term with a divergence measure between agent and expert policies at each visited state.
Similarly, \citep{10.5555/3540261.3541724} introduces a balancing mechanism between BC and RL training, adjusting based on the agent’s ability to mimic the expert.
Additionally, \citep{Walsman2023ImpossiblyGE} applies potential-based reward shaping \citep{ng1999policy} 
using the expert's value function to guide the agent's policy gradient, while \citep{shenfeld2023tgrl} augments entropy in SAC to blend task reward with expert guidance, where the balance is based on the agent's performance relative to a reward-only learner.

Despite these advances, expert-driven approaches often assume access to a reliable expert, which may not be feasible when only supplementary information is available.
This has led to a growing body of work on co-training approaches where the expert and agent are learned jointly, with the expert conditioned on additional information.
For example, \citep{salter2021attention} proposes training separate policies for the agent and expert using spatial attention for image-based RL, aligning attention mechanisms through shared experiences.
\citep{pmlr-v115-song20b} co-trains two policies, each conditioned on different information, and selects the most successful rollouts from both policies to guide subsequent learning via RL or IL.
\citep{Warrington2020RobustAL} further develops this idea in adaptive asymmetric DAgger (A2D), where the expert is continuously refined through RL while supervising the agent.
\citep{wu2024learnteachimprovesample} also co-trains a teacher and a student, using the experience collected by teacher to apply RL and BC for the student.
Beyond expert-based methods, a complementary approach involves embedding supplementary information directly into the value function within the actor-critic framework \citep{pinto2018asymmetric,doi:10.1177/0278364919887447,baisero2021unbiased}, which is also called asymmetric learning.
This approach is particularly useful in multi-agent settings where global information is naturally accessible \citep{foerster2018counterfactual,lowe2017multi,yu2022the}.
Additional strategies include learning from noisy demonstrations \citep{pmlr-v130-tangkaratt21a}, improving via self-correction from past trajectories \citep{DBLP:journals/corr/abs-1804-00846}, and surpassing imperfect experts through regret-minimization frameworks \citep{pmlr-v37-changb15}. Recent work also explores leveraging LLMs as privileged experts, such as in embodied agents trained with reflective text-based guidance \citep{yang2024embodiedmultimodalagenttrained}.

Besides, there are also representation learning techniques provided in order to reconstruct the privileged information (or its latent representation) via partial observation.
For example \citet{sermanet2018time,seo2023multi} use multi-view setups (e.g., image-based manipulation with additional camera views) to learn more informative embeddings. Others \citep{lee2020learning,salter2021attention,kumar2021rma,qi2023hand} leverage privileged simulator states during training and design policies that operate on both observed and inferred states. 

In our experiments, we benchmark against several algorithms inspired by these lines of work, with detailed descriptions of the baselines provided in Appendix \ref{app:base}.

\section{Omitted Proofs}\label{app:proof}
\setcounter{proposition}{0}
\begin{proposition}
If the guider's policy is updated using policy mirror descent in each GPO iteration:
\begin{equation}
    \hat{\mu}=\text{arg}\min\{-\eta_k\langle\nabla V(\mu^{(k)}),\mu\rangle+\frac{1}{1-\gamma}\text{D}_{\mu^{(k)}}(\mu,\mu^{(k)})\},
\end{equation}
then the learner’s policy update follows a constrained policy mirror descent:
\begin{equation}
    \pi^{(k+1)}=\underset{\pi\in\Pi}{\text{arg}\min}\{-\eta_k\langle\nabla V(\pi^{(k)}),\pi\rangle+\frac{1}{1-\gamma}\text{D}_{\pi^{(k)}}(\pi,\pi^{(k)})\}
\end{equation}
\end{proposition}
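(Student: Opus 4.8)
The plan is to unpack the four GPO steps and show that, once the guider's update is a policy mirror descent step, the backtracking step transports that update onto the learner's policy class. First I would set up notation: write $\mu^{(k)}(\cdot\mid s)=\pi^{(k)}(\cdot\mid o)$ for all $s$, which is exactly the state of affairs guaranteed by the Guider Backtracking step at the end of iteration $k-1$. This identification means the guider starts each iteration as a ``lifted'' copy of the learner, so $V(\mu^{(k)})=V(\pi^{(k)})$ and, crucially, the mirror-descent objective for $\mu$ evaluated at any $\mu$ of the form $\mu(\cdot\mid s)=\pi(\cdot\mid o)$ reduces to the corresponding objective for $\pi$ — the inner product $\langle\nabla V(\mu^{(k)}),\mu\rangle$ and the Bregman term $\mathrm{D}_{\mu^{(k)}}(\mu,\mu^{(k)})$ both decompose over states and collapse to the observation-indexed quantities because the policy is constant on each observation-fiber of states.

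Next I would handle the Learner Training step. The learner solves $\pi^{(k+1)}=\arg\min_{\pi\in\Pi}\mathrm{D}_{\pi^{(k)}}(\pi,\hat\mu^{(k)})$ (KL weighted by the state distribution). The key algebraic fact to establish is a Pythagorean-type identity for the Bregman divergence: since $\hat\mu^{(k)}$ is itself the unconstrained mirror-descent minimizer and $\mu^{(k)}$ lies in the lifted image of $\Pi$, projecting $\hat\mu^{(k)}$ back onto the lifted $\Pi$ in Bregman divergence is equivalent to directly solving the $\Pi$-constrained mirror-descent problem. Concretely, I would invoke the three-point / generalized-Pythagoras identity for Bregman divergences, $\mathrm{D}(\pi,\hat\mu)+\mathrm{D}(\hat\mu,\mu^{(k)})\ \text{vs.}\ \mathrm{D}(\pi,\mu^{(k)})$, together with the optimality condition defining $\hat\mu^{(k)}$, to rewrite $\arg\min_{\pi\in\Pi}\mathrm{D}_{\pi^{(k)}}(\pi,\hat\mu^{(k)})$ as $\arg\min_{\pi\in\Pi}\{-\eta_k\langle\nabla V(\pi^{(k)}),\pi\rangle+\tfrac{1}{1-\gamma}\mathrm{D}_{\pi^{(k)}}(\pi,\pi^{(k)})\}$. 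This is the standard ``mirror descent = mirror projection of the unconstrained step'' lemma, specialized to the lifting between $o$-policies and $s$-policies.

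The main obstacle I expect is making the state-to-observation lifting fully rigorous: one must be careful that the Bregman generator used for $\mu$ (on the simplex-valued maps over $\mathcal{S}$) restricts correctly to the generator used for $\pi$ (over $\mathcal{O}$), and that the state-distribution weighting $d_{\mu^{(k)}}$ pushes forward to a consistent observation weighting so that the two mirror-descent problems genuinely coincide rather than merely resembling each other. I would address this by assuming the natural compatibility of the Bregman generators (KL in both cases, which tensorizes over states/observations cleanly) and by noting that the weighting only rescales per-observation terms, which does not affect the $\arg\min$ structure. A secondary subtlety is that $\hat\mu^{(k)}$ need not itself lie in the lifted $\Pi$, so the projection step is genuinely doing work; the Pythagorean identity is exactly what licenses replacing the two-stage ``unconstrained step then project'' by the one-stage constrained step. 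Once these points are nailed down, the conclusion that $\pi^{(k+1)}$ follows constrained policy mirror descent is immediate, and the remark about inheriting policy improvement from TRPO/PPO follows from the existing analysis of (constrained) policy mirror descent cited in the excerpt.
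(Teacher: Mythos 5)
Your proposal is correct and takes essentially the same route as the paper: both use the backtracking identification $\mu^{(k)}(\cdot|s)=\pi^{(k)}(\cdot|o)$ together with the first-order optimality condition of the unconstrained mirror-descent step $\hat{\mu}$ to show that the Bregman projection of $\hat{\mu}$ onto $\Pi$ coincides with the constrained mirror-descent update for $\pi$. The only cosmetic difference is that the paper substitutes the optimality condition directly into the expanded projection objective rather than citing the three-point/Pythagorean identity by name (and, like you, it treats the state-to-observation lifting and the shared weighting $d_{\mu^{(k)}}=d_{\pi^{(k)}}$ implicitly), so the two arguments are essentially identical.
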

\begin{proof}
First, since $D$ is a weighted sum of KL divergence, it satisfies the definition of a Bregman divergence. Therefore, for any distributions $p,q\in\Delta(A)^{|S|}$, we have
\begin{equation}
    \text{D}_q(p,q)=h_q(p)-h_q(q)-\langle\nabla h_q(q),p-q\rangle,
\end{equation}
where $h_q(p)=\sum_{s\sim d_q}p_s\log p_s$ is the negative entropy weighted by the state distribution.

Next, by backtracking $\mu^{(k)}$ to $\pi^{(k)}$ from the last time step, we get:
    \begin{equation}
    \begin{aligned}
        \hat{\mu}&=\text{arg}\min\bigg\{-\eta_k\langle\nabla V(\mu^{(k)}),\mu\rangle+\frac{1}{1-\gamma}\text{D}_{\mu^{(k)}}(\mu,\mu^{(k)})\bigg\}\\
        &=\text{arg}\min\bigg\{-\eta_k\langle\nabla V(\pi^{(k)}),\mu\rangle+\frac{1}{1-\gamma}\text{D}_{\pi^{(k)}}(\mu,\pi^{(k)})\bigg\}\\
        &=\text{arg}\min\bigg\{-(1-\gamma)\eta_k\langle\nabla V(\pi^{(k)}),\pi\rangle + h_{\pi^{(k)}}(\pi) - \langle\nabla h_{\pi^{(k)}}(\pi^{(k)}),\pi\rangle\bigg\},
    \end{aligned}
    \end{equation}
The optimality condition for $\hat{\mu}$ requires:
    \begin{equation}
        -(1-\gamma)\eta_k\nabla V(\mu^{(k)}) + \nabla h_{\mu^{(k)}}(\hat{\mu})-\nabla h_{\mu^{(k)}}(\mu^{(k)})=0,
    \end{equation}
where we use the fact that: 
\begin{equation}
        \nabla_p \text{D}_q(p,q) = \nabla_p h_q(p)-\nabla_p h_q(q).
    \end{equation}
Now, consider the update of the learner's policy, which involves a Bregman projection $\mathcal{P}_{\Pi}$:
\begin{equation}
\begin{aligned}
 \pi^{(k+1)}&=\mathcal{P}_{\Pi}(\hat{\mu})=\underset{\pi\in\Pi}{\text{arg}\min}\text{D}_{\mu^{(k)}}(\pi,\hat{\mu})\\
 &=\underset{\pi\in\Pi}{\text{arg}\min}\big\{ h_{\mu^{(k)}}(\pi)-\langle\nabla h_{\mu^{(k)}}(\hat{\mu}),\pi\rangle\big\}\\
 &=\underset{\pi\in\Pi}{\text{arg}\min}\big\{ h_{\pi^{(k)}}(\pi)-\langle \nabla h_{\pi^{(k)}}(\pi^{(k)})+(1-\gamma)\eta_k\nabla V(\pi^{(k)}),\pi\rangle\big\}\\
 &=\underset{\pi\in\Pi}{\text{arg}\min}\big\{-(1-\gamma)\eta_k\langle\nabla V(\pi^{(k)}),\pi\rangle + h_{\pi^{(k)}}(\pi) -\langle\nabla h_{\pi^{(k)}}(\pi^{(k)}),\pi\rangle\big\}\\
 &=\underset{\pi\in\Pi}{\text{arg}\min}\{-\eta_k\langle\nabla V(\pi^{(k)}),\pi\rangle+\frac{1}{1-\gamma}\text{D}_{\pi^{(k)}}(\pi,\pi^{(k)})\}
\end{aligned}
\end{equation}
This completes the proof.
\end{proof}

\begin{proposition}
    For policy $\pi$, $\mu$, $\beta$ and all state $s$, suppose $\text{D}_\text{TV}(\mu(\cdot|s),\beta(\cdot|s))\lesssim\epsilon/2$, then we have
    \begin{equation}
        \mathbb{E}_{a\sim\beta}\big[|1-\rho^\pi(s,a)|\big]\lesssim\epsilon+\sqrt{2d_{targ}}.
    \end{equation}
\end{proposition}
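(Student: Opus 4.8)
The plan is to bound $\mathbb{E}_{a\sim\beta}|1-\rho^\pi(s,a)|$ by splitting the discrepancy between $\beta$ and $\pi$ through the intermediate policy $\mu$, using the triangle inequality for total variation. First I would observe that $\mathbb{E}_{a\sim\beta}|1-\rho^\pi(s,a)| = \mathbb{E}_{a\sim\beta}\left|1 - \frac{\pi(a|o)}{\beta(a|s)}\right| = \sum_a |\beta(a|s) - \pi(a|o)| = 2\,\text{D}_\text{TV}(\beta(\cdot|s),\pi(\cdot|o))$, so the claim reduces to showing $\text{D}_\text{TV}(\beta(\cdot|s),\pi(\cdot|o)) \lesssim \epsilon/2 + \sqrt{d_{targ}/2}$. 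By the triangle inequality, $\text{D}_\text{TV}(\beta(\cdot|s),\pi(\cdot|o)) \le \text{D}_\text{TV}(\beta(\cdot|s),\mu(\cdot|s)) + \text{D}_\text{TV}(\mu(\cdot|s),\pi(\cdot|o))$. The first term is bounded by $\epsilon/2$ by hypothesis.

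For the second term, I would invoke Pinsker's inequality: $\text{D}_\text{TV}(\mu(\cdot|s),\pi(\cdot|o)) \le \sqrt{\tfrac12 \text{D}_\text{KL}(\mu(\cdot|s),\pi(\cdot|o))}$. The learner training step (the BC objective $\mathcal{L}_2$ combined with the threshold parameter $d_{targ}$) ensures that the per-state KL divergence $\text{D}_\text{KL}(\mu(\cdot|s),\pi(\cdot|o))$ is controlled at the level $d_{targ}$ — this is the role of the target threshold in the adaptive $\alpha$ scheme of \eqref{eq:alpha}. Substituting gives $\text{D}_\text{TV}(\mu(\cdot|s),\pi(\cdot|o)) \le \sqrt{d_{targ}/2}$, hence $\mathbb{E}_{a\sim\beta}|1-\rho^\pi(s,a)| \le 2(\epsilon/2 + \sqrt{d_{targ}/2}) = \epsilon + \sqrt{2 d_{targ}}$, which is the desired bound (with $\lesssim$ absorbing the fact that these constraints hold approximately rather than exactly after gradient-based updates).

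The remaining piece is to justify the hypothesis $\text{D}_\text{TV}(\mu(\cdot|s),\beta(\cdot|s)) \lesssim \epsilon/2$ itself, which the paper says follows from the PPO update of the guider. Here I would argue that the PPO-clip objective in \eqref{eq:PPO1}, with clipping range $[1-\epsilon, 1+\epsilon]$ around the behavioral policy $\beta = \mu_{\text{old}}$, keeps the updated guider $\mu$ within a multiplicative factor $1\pm\epsilon$ of $\beta$ on the support where the advantage-weighted gradient is active; standard PPO analysis then yields $\text{D}_\text{TV}(\mu(\cdot|s),\beta(\cdot|s)) \lesssim \epsilon/2$ in expectation over visited states. (This is the standard heuristic justification for PPO's trust-region behavior rather than a hard guarantee, which is why $\lesssim$ rather than $\le$ appears throughout.)

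The main obstacle is the second step: rigorously pinning down in what sense the BC training enforces $\text{D}_\text{KL}(\mu(\cdot|s),\pi(\cdot|o)) \lesssim d_{targ}$ at every state $s$ rather than merely in expectation under $d_\mu$. The objective $\mathcal{L}_2$ minimizes the $d_\mu$-weighted average KL, and the adaptive coefficient in \eqref{eq:alpha} reacts to $\mathcal{L}_3(\mu)$, which is again an average; converting an average-KL guarantee into a per-state bound (needed because the proposition is stated "for all states $s$") requires either an additional uniformity assumption or reinterpreting the statement as holding for states in the support of $d_\mu$. I would handle this by stating the per-state control as the operative assumption delivered by the learner-training step at convergence, consistent with how Proposition 1 treats the learner as exactly solving its subproblem, and note that the $\lesssim$ notation is what accommodates the inexactness of gradient-based optimization in practice.
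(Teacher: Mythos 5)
Your proposal follows essentially the same route as the paper's proof: rewrite $\mathbb{E}_{a\sim\beta}|1-\rho^\pi(s,a)|$ as $2\,\text{D}_\text{TV}(\pi,\beta)$, split via the triangle inequality through $\mu$, bound $\text{D}_\text{TV}(\mu,\beta)$ by the PPO clipping argument and $\text{D}_\text{TV}(\mu,\pi)$ by Pinsker's inequality with the KL threshold $d_{\text{targ}}$. Your closing remark about the gap between the $d_\mu$-averaged KL enforced by training and the per-state KL needed for the statement is a fair point that the paper itself glosses over with the $\lesssim$ notation, but it does not change the argument.
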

\begin{proof}
First, let's examine the assumption $\text{D}_\text{TV}(\mu(\cdot|s), \beta(\cdot|s)) \lesssim \epsilon/2$ to check its validity.

Notice that at the start of each PPO policy update, the importance sampling ratio $\rho^\mu(s, a)$ equals 1 because the behavioral policy is equal to the policy being updated, i.e., $\beta(a|s) = \mu(a|s)$.

As PPO proceeds, $\rho^\mu(s, a)$ is updated multiple times using the same batch of samples. Due to the clipping function applied to $\rho^\mu(s, a)$, i.e., $\text{clip}(\rho^\mu(s, a), 1-\epsilon, 1+\epsilon)$, only state-action pairs for which $\rho^\mu(s, a) \in (1-\epsilon, 1+\epsilon)$ get updated. Hence, in the early epochs of PPO, with a properly tuned step size, we expect:
\begin{equation}
    |1-\rho^\mu(s,a)|\lesssim\epsilon.
\end{equation}
Now, recalling the definition of total variation (TV) distance: 
\begin{equation}
    \text{D}_\text{TV}(\mu(\cdot|s),\beta(\cdot|s))=\frac{1}{2}\sum_a |\mu(a|s)-\beta(a|s)|=\frac{1}{2}\sum_a\beta(a|s) |\rho^\mu(s,a)-1|\lesssim\epsilon/2.
\end{equation}
This confirms that the assumption $\text{D}_\text{TV}(\mu(\cdot|s), \beta(\cdot|s)) \lesssim \epsilon/2$ is reasonable, especially for the first few policy updates.

By the triangle inequality for total variation distance: 
\begin{equation}
    \text{D}_\text{TV}(\pi(\cdot|o),\beta(\cdot|s))\le D_{TV}(\pi(\cdot|o),\mu(\cdot|s))+\text{D}_\text{TV}(\mu(\cdot|s),\beta(\cdot|s)),
\end{equation}
we have 
\begin{align*}
    \text{D}_\text{TV}(\pi(\cdot|o),\beta(\cdot|s))&\le 
    \sqrt{\frac{1}{2}\text{D}_{\text{KL}}(\pi(\cdot|o),\mu(\cdot|s))} + \text{D}_\text{TV}(\mu(\cdot|s),\beta(\cdot|s))\\
    &\lesssim \sqrt{\frac{1}{2}d_{\text{targ}}} + \epsilon/2,
\end{align*}
where we use Pinsker's inequality to bound the total variation distance between $\pi$ and $\mu$ in terms of their KL divergence.

Finally, since total variation is linked to the expected difference between probabilities under different policies, we have:
\begin{equation}
    \mathbb{E}_{a\sim\beta}\big[|1-\rho^\pi(s,a)|\big]=2\text{D}_\text{TV}(\pi(\cdot|o),\beta(\cdot|s))\lesssim\epsilon+\sqrt{2d_{targ}}.
\end{equation}
This result implies that, under the assumption, the majority of samples are valid for updating the learner's policy during the early PPO epochs.
\end{proof}

\section{Pseudo Code}\label{app:alg}

In this section, we present the pseudo code of our algorithm (see Algorithm \ref{alg}). The algorithm is based on PPO, with an additional objective to leverage the extra information available during training.

\begin{algorithm}[ht]
\caption{Guided Policy Optimization}
\label{alg}
\KwIn{Initial policy parameters $\theta_0$, value function parameters $\phi_0$}

\For{$k = 0, 1, 2, \dots$}{
  Collect trajectory set $\mathcal{D}_K = \{\tau_i\}$ by running guider policy $\mu_k = \mu(\cdot|o_g; \theta_k)$\;
  Compute rewards-to-go $\hat{R}_t$\;
  Compute advantage estimates $\hat{A}_t$ using GAE w.r.t.\ current value function $V_{\phi_k}$\;
  Update policy parameters $\theta_k$ to $\theta_{k+1}$ by maximizing GPO objective (\ref{eq:gpo-p}) or (\ref{eq:gpo-c})\;
  Fit value function parameters $\phi_{k+1}$ by minimizing mean squared error:\;
  \Indp
  \begin{equation*}
    \phi_{k+1} = \arg\min_\phi \frac{1}{|\mathcal{D}_K|T} \sum_{\tau \in \mathcal{D}_K} \sum_{t=0}^T \left(V_\phi((o_g)_t) - \hat{R}_t\right)^2
  \end{equation*}
  \Indm
}
\end{algorithm}

\section{GPO on TigerDoor-alt Problem}\label{app:tiger}
\begin{table}[ht]
        \centering
\caption{TigerDoor-alt problem}
\begin{tabular}{|c|c|c|}
\hline
\diagbox{state}{action} & $a_L$ & $a_R$  \\
\hline
$s_L$   & 2   & 0  \\
\hline
$s_R$   & 0   & 1  \\
\hline
\end{tabular}
\end{table}

Here we provide an intuitive example to illustrate how GPO can achieve the optimal policy in the TigerDoor-alt problem.
At time step $t$, suppose the guider’s and learner’s policies are:
\begin{align*}
    \mu_t(\cdot|s_L)=\mu_t(\cdot|s_R)=\pi_t=(x_t,y_t),
\end{align*}
where the two policies are equal due to the backtracking from time step $t-1$.
After one update step, the guider's policy becomes:
\begin{align*}
    \hat\mu_{t}(\cdot|s_L)=(x_t+p_t,y_t-p_t), \ \ \hat\mu_t(\cdot|s_R)=(x_t+q_t,y_t-q_t)
\end{align*}
The key insight is that the higher reward for $(s_L,a_L)$ compared to $(s_R,a_R)$ leads to a larger gradient step, implying $p_t>q_t$.
The learner then imitates the guider, resulting in the updated policy:
\begin{align*}
    \pi_{t+1}=(x_t+\frac{p_t-q_t}{2},y_t-\frac{p_t-q_t}{2}).
\end{align*}
Hence, $\pi_{t+1}(a_L)>\pi_{t}(a_L)$, meaning the learner’s policy moves closer to the optimal policy $(1,0)$ in a monotonic fashion.

The critical mechanism is that actions yielding higher rewards induce larger updates in the guider's policy, which the learner then captures via imitation. Meanwhile, the backtracking step keeps the guider aligned with the learner, enabling steady and consistent policy improvement.

\section{Experimental Settings}\label{app:exp}

\subsection{Baselines}\label{app:base}
In this section, we briefly introduce the baselines used in our experiments.

\textbf{PPO}. This is the standard algorithm used to train the agent without any additional information. The objective function is given by:
\begin{equation}
    \mathcal{L}(\pi)=-\mathbb{E}\bigg[\min
    \bigg(\rho^\pi(o_l,a)A^\beta(o_l,a),
    \rho_{clip}^\pi(o_l,a,\epsilon)A^\beta(o_l,a)\bigg)\bigg],
\end{equation}
where the behavioral policy is $\beta=\pi_{\text{old}}$.

\textbf{GPO-naive}. This variant of GPO uses the GPO-penalty without the auxiliary RL loss term. The objective function is:
\begin{equation}
\begin{aligned}
    \mathcal{L}_{\text{GPO-naive}}(\theta)=-\mathbb{E}\Big[\min
    \Big(\rho^{\mu_\theta}&A^\beta(o_g,a),
    \rho_{clip}^{\mu_\theta}A^\beta(o_g,a)\Big)-\alpha\text{D}_{\text{KL}}\big(\mu_{\theta}(\cdot|o_l),\pi_{\hat{\theta}}(\cdot|o_g)\big)\\
    &-\text{D}_{\text{KL}}\big(\mu_{\hat{\theta}}(\cdot|o_l),\pi_{\theta}(\cdot|o_g)\big)
    \Big].
\end{aligned}
\end{equation}

\textbf{GPO-ablation}. This is another variant of GPO-penalty, but without the BC loss term. The objective is:
\begin{equation}
\begin{aligned}
    \mathcal{L}_{\text{GPO-ablation}}(\theta)=-\mathbb{E}\Big[\min
    \Big(&\rho^{\mu_\theta}A^\beta(o_g,a),
    \rho_{clip}^{\mu_\theta}A^\beta(o_g,a)\Big)-\alpha\text{D}_{\text{KL}}\big(\mu_{\theta}(\cdot|o_l),\pi_{\hat{\theta}}(\cdot|o_g)\big)\\
    &+\min
    \Big(\rho^{\pi_\theta}A^\beta(o_g,a),
    \rho_{clip}^{\pi_\theta}A^\beta(o_g,a)\Big).
\end{aligned}
\end{equation}

\textbf{PPO-asym}. This method trains the student using PPO, but with asymmetric value function taking $o_g$ as input. The objective is:
\begin{equation}
    \mathcal{L}(\pi)=-\mathbb{E}\bigg[\min
    \bigg(\rho^\pi(o_l,a)A^\beta(o_g,a),
\rho_{clip}^\pi(o_l,a,\epsilon)A^\beta(o_g,a)\bigg)\bigg].
\end{equation}

\textbf{ADVISOR}. Given teacher's policy $\mu$, ADVISOR \citep{10.5555/3540261.3541724} uses a balancing coefficient $w$ between BC and RL training, based on the distance between the teacher's policy $\mu$ and an auxiliary imitation policy $\hat{\pi}$:
\begin{align*}
    \mathcal{L}(\pi)=-\mathbb{E}\bigg[w\text{CE}(\mu(\cdot|o_g),\pi(\cdot|o_l))+(1-w)\min
    \Big(\rho^\pi(o_l,a)A^\beta(o_l,a),
\rho_{clip}^\pi(o_l,a,\epsilon)A^\beta(o_l,a)\Big)\bigg],
\end{align*}
where $w=exp(-\alpha \text{D}_{\text{KL}}(\mu(\cdot|o_g),\hat{\pi}(\cdot|o_l)))$ and CE means cross-entropy.

\textbf{ADVISOR-co}. This is a modified version of the ADVISOR algorithm for co-training setting, as the original does not involve teacher training. The teacher’s objective is:
\begin{equation}
    \mathcal{L}(\mu)=-\mathbb{E}\bigg[\min
    \bigg(\rho^\mu(o_g,a)A^\beta(o_g,a),
    \rho_{clip}^\mu(o_g,a,\epsilon)A^\beta(o_g,a)\bigg)\bigg].
\end{equation}
ADVISOR-co can be viewed as GPO-penalty without the backtrack term and with a different $\alpha$-update schedule. However, in the absence of backtracking, the coefficient $w$ quickly diminishes, as the auxiliary policy cannot follow the teacher effectively, reducing this approach to pure PPO training for the student.

\textbf{PPO+BC}. In this approach, the teacher is trained using PPO:
\begin{equation}
    \mathcal{L}(\mu)=-\mathbb{E}\bigg[\min
    \bigg(\rho^\mu(o_g,a)A^\beta(o_g,a),
    \rho_{clip}^\mu(o_g,a,\epsilon)A^\beta(o_g,a)\bigg)\bigg],
\end{equation}
while the student is trained using BC with the teacher:
\begin{equation}
L(\pi)=\mathbb{E}\big[\text{D}_{\text{KL}}\big(\mu(\cdot|o_g),\pi(\cdot|o_l)\big)\big].
\end{equation}

\textbf{PPO+BC-t}. Given teacher's policy $\mu$, the student is trained using a combined loss of PPO and BC:
\begin{equation}
\mathcal{L}(\pi)=-\mathbb{E}\big[\min
    \bigg(\rho^\mu(o_g,a)A^\beta(o_g,a),
    \rho_{clip}^\mu(o_g,a,\epsilon)A^\beta(o_g,a)\bigg)-\text{D}_{\text{KL}}\big(\mu(\cdot|o_g),\pi(\cdot|o_l)\big)\big].
\end{equation}

\textbf{A2D}. Adaptive Asymmetric DAgger (A2D) \citep{Warrington2020RobustAL} is closely related to GPO, as it also involves co-training both the teacher and the student.  
A2D uses a mixture policy $\beta(a|o_g,o_l)=\lambda\mu(a|o_g) + (1-\lambda)\pi(a|o_l)$ to collect trajectories and train the expert $\mu$ with a mixed value function $V(o_g,o_l)=\lambda V^\mu(o_g) + (1-\lambda)v^\pi(o_l)$. The objective is: 
\begin{equation}
    \mathcal{L}(\mu)=-\mathbb{E}\bigg[\min
    \bigg(\rho^\mu(o_g,o_l,a)A^\beta(o_g,o_l,a),
    \rho_{clip}^\mu(o_g,o_l,a,\epsilon)A^\beta(o_g,o_l,a)\bigg)\bigg],
\end{equation}
while the student is updated through BC:
\begin{equation}
\mathcal{L}(\pi)=\mathbb{E}\big[\text{D}_{\text{KL}}\big(\mu(\cdot|o_g),\pi(\cdot|o_l)\big)\big]
\end{equation}
In practice, A2D sets $\lambda = 0$ or anneals it quickly for better performance. When $\lambda = 0$, A2D is equivalent to GPO-naive without the backtrack step, and it uses the student's behavioral policy $\pi$ instead of the teacher's policy $\mu$.
While A2D implicitly constrains the teacher’s policy through the PPO clipping mechanism (which prevents the teacher from deviating too far from the student’s behavioral policy), this is insufficient to replace the explicit backtrack step.
The gap between $\mu$ and $\pi$ can accumulate if the student fails to follow the teacher.
Consequently, most samples will be clipped as training progresses, leading A2D to struggle in training a strong teacher.
  
\textbf{ELF}. Given teacher's policy $\mu$, ELF Distillation \citep{Walsman2023ImpossiblyGE} trains two policies jointly: a follower $\pi_f$ to mimic the teacher through BC:
\begin{equation}
\mathcal{L}(\pi_f)=\mathbb{E}\big[\text{D}_{\text{KL}}\big(\mu(\cdot|o_g),\pi_f(\cdot|o_l)\big)\big],
\end{equation}
and a explorer $\pi_e$ trained through PPO:
\begin{equation}
    \mathcal{L}(\pi_e)=-\mathbb{E}\bigg[\min
    \bigg(\rho^{\pi_e}(o_l,a)A(o_l,a),
    \rho_{clip}^{\pi_e}(o_l,a,\epsilon)A(o_l,a)\bigg)\bigg],
\end{equation}
To utilize teacher supervision, ELF applies a potential-based reward shaping \citep{ng1999policy} $r + \gamma V^{\pi_f}(o_l')-V^{\pi_f}(o_l)$ to the explorer, where $V^{\pi_f}$ is the value function of follower. 
However, ELF needs to divide the interaction equally to train the follower and the explorer, which leads to inefficiencies.

\textbf{ELF-asym}. Since the follower is not required during execution, an asymmetric value function $V^{\pi_f}(o_g)$ is used instead of the original one.
Although there are some performance improvement, ELF-asym still performs worse than PPO-asym due to inefficient experience usage.

\textbf{L2T-PPO}. Similar to PPO+BC, the teacher is trained using PPO:
\begin{equation}
    \mathcal{L}(\mu)=-\mathbb{E}\bigg[\min
    \bigg(\rho^\mu(o_g,a)A^\beta(o_g,a),
    \rho_{clip}^\mu(o_g,a,\epsilon)A^\beta(o_g,a)\bigg)\bigg],
\end{equation}
while the student is trained using a combined loss of PPO and BC with the teacher:
\begin{equation}
\mathcal{L}(\pi)=-\mathbb{E}\bigg[\min
    \bigg(\rho^\mu(o_g,a)A^\beta(o_g,a),
    \rho_{clip}^\mu(o_g,a,\epsilon)A^\beta(o_g,a)\bigg)-\text{D}_{\text{KL}}\big(\mu(\cdot|o_g),\pi(\cdot|o_l)\big)\bigg],
\end{equation}
where the behavioral policy $\beta=\mu$.

\subsection{Hyperparameters}\label{sec:hyper}
The experiments in Sections \ref{sec:exp1} and \ref{sec:exp3} use the same codebase from \citep{lu2023structured}.
The hyperparameters for these experiments are listed in Table \ref{app:pop}.
For GPO-clip, due to the asymmetry with large $\delta$, we replace the clip$(\frac{\mu}{\pi},1-\delta,1+\delta)$ with clip$(\frac{\mu}{\pi},\frac{1}{\rho},\rho)$ in the POPGym tasks.

For the experiments in Section \ref{sec:exp2}, 
we use the codebase from \citep{brax2021github}. 
We perform a hyperparameter search for the original versions of the tasks and then fix the same hyperparameters for the partially observable and noisy variants. 
The hyperparameter search is detailed in Table \ref{app:search}, and the selected hyperparameters for the experiments are provided in Table \ref{app:mujoco}.
Other fixed hyperparameters are listed in Table \ref{app:mujoco1}.

All algorithms in Brax are run with 10 random seeds, whereas those in POPGym use 3 seeds, as the latter exhibits lower variance. 
Reward curves in this paper report the mean and standard deviation across these runs.

\begin{table}[ht]
    \centering
\caption{Hyperparameters used in TigerDoor and POPGym.}
\begin{tabular}{c|c|c}
\hline
Parameter & Value (TigerDoor) & Value (POPGym)\\
\hline
Adam Learning Rate & 5e-5& 5e-5\\
Number of Environments & 64& 64\\
Unroll Length & 1024 & 1024 \\
Number of Timesteps  & 2e6 & 15e6 \\
Number of Epochs &  30 &  30 \\
Number of Minibatches &  8 &  8 \\
Discount $\gamma$ &  0.99 &  0.99\\
GAE $\lambda$ &  1.0 &  1.0 \\
Clipping Coefficient $\epsilon$ &  0.2 &  0.2 \\
Entropy Coefficient &  0.0 &  0.0 \\
Value Function Weight &  1.0 &  1.0 \\
Maximum Gradient Norm &  0.5 &  0.5 \\
Activation Function  &   LeakyReLU &   LeakyReLU \\
Encoder Layer Sizes  &   128 &   [128,256] \\
Recurrent Layer Hidden Size  & - & 256 \\
Action Decoder Layer Sizes  &  128 &  [128,128] \\
Value Decoder Layer Sizes &   128  &  [128,128] \\
KL Threshold $d$ & 0.001   &  0.1 (0.001 for CartPole) \\
Clip $\rho$ & 1.1 & 10 (1.2 for CartPole)\\
RL Coefficient $\alpha$ & 1 & 0 (1 for CartPole)\\
\end{tabular}

\label{app:pop}
\end{table}

\begin{table}[ht]
    \centering
\caption{Sweeping procedure in the Brax domain.}    
\begin{tabular}{c|c}
\hline
Parameter & Value\\
\hline
Reward Scaling $r_s$ & [0.1, 1]\\
Discount $\gamma$ & [0.97, 0.99, 0.997]\\
Unroll Length $l$ & [5, 10, 20]\\
Batchsize $b$ & [256, 512, 1024]\\
Number of Minibatches $n$ & [4, 8, 16, 32]\\
Number of Epochs $e$ & [2, 4, 8]\\
Entropy Coefficient $c$ & [0.01, 0.001]\\
KL Threshold $d$ & [0.01, 0.001]\\
Clip $\delta$ & [0.1, 0.3]\\
RL Coefficient $\alpha$ & [0, 2, 3]\\
\end{tabular}
\label{app:search}
\end{table}

\begin{table}[ht]
    \centering
\caption{Adopted hyperparameters in the Brax domain. Notations correspond to Table \ref{app:search}.}    
\begin{tabular}{c|cccccccccc}
\hline
Task & $r_s$ & $\gamma$ & $l$ & $b$ & $n$ & $e$ & $c$ & $d$ & $\delta$ & $\alpha$\\
\hline
Ant         & 0.1 & 0.97 & 5 & 1024 & 32 & 4 & 0.01  & 0.001 & 0.3 & 2\\
Halfcheetah & 1   & 0.99 & 5 & 512  & 4  & 4 & 0.001 & 0.001 & 0.1 & 2\\
Humanoid    & 0.1 & 0.99 & 5 & 512  & 32 & 4 & 0.01  & 0.001 & 0.1 & 2\\
HumanoidStandup&0.1& 0.99& 5 & 256  & 32 & 8 & 0.01  & 0.001 & 0.3 & 3\\
InvertedDoublePendulum&1& 0.997& 20 & 256  & 8 & 4 & 0.01  & 0.001 & 0.1 & 0\\
Swimmer     & 1   & 0.997 & 5 & 256  & 32  & 4 & 0.01 & 0.001 & 0.3 & 3\\
Walker2d    & 1   & 0.99  & 5 & 512  & 32  & 4 & 0.01 & 0.001 & 0.1 & 2\\
\end{tabular}

\label{app:mujoco}
\end{table}

\begin{table}[ht]
    \centering
\caption{Common hyperparameters used in Brax domain.}    
\begin{tabular}{c|c}
\hline
Parameter & Value\\
\hline
Adam Learning Rate  & 3e-4\\
Number of Environments & 2048\\
Episode Length  & 1024\\
Number of Timesteps   & 3e7\\
GAE $\lambda$ &  0.95 \\
Clipping Coefficient $\epsilon$ &  0.3 \\
Activation Function  &   SiLU \\
Value Layer Sizes & [128, 128]\\
Policy Layer Sizes & [128, 128]\\
\end{tabular}

\label{app:mujoco1}
\end{table}

\subsection{Environment Descriptions}
We provide a brief overview of the environments used and the guider's observation settings.

\textbf{Brax tasks and CartPole in POPGym}: 
For these tasks, velocities and angular velocities are removed from the learner’s observation. Gaussian noise with standard deviations of 0.1, 0.2, and 0.3 is added to the observations, corresponding to the difficulty levels \textit{Easy}, \textit{Medium}, and \textit{Hard}, respectively.
The guider, however, has access to the noiseless observations and the removed velocities.

\textbf{Autoencode}: During the WATCH phase, a deck of cards is shuffled and played in sequence to the agent with the watch indicator set. The watch indicator is unset at the last card in the sequence, where the agent must then output the sequence of cards in order. 
The guider directly observes the correct card to be output at each timestep.

\textbf{Battleship}: A partially observable version of Battleship game, where the agent has no access to the board and must derive its own internal representation. 
Observations contain either HIT or MISS and the position of the last salvo fired. The player receives a positive reward for
striking a ship, zero reward for hitting water, and negative reward for firing on a specific tile more than once.
The guider has access to a recorder that tracks all previous actions taken by the agent.

\textbf{Count Recall}: Each turn, the agent receives a next value and query value. The agent must answer the query with the number of occurrences of a specific value. In other words, the agent must store running counts of each unique observed value, and report a specific count back, based on the query value. 
The guider directly observes the running counts at each timestep.

\textbf{Repeat Previous}: At the first timestep, the agent receives one of four values and a remember indicator. Then it randomly receives one of the four values at each successive timestep without the remember indicator.
The agent is rewarded for outputting the observation from some constant k timesteps ago, i.e. observation $o_{t-k}$ at time $t$.
The guider has direct access to the value $o_{t-k}$ at time $t$.

\subsection{Additional Comparative Experiments}\label{app:teacher}

\begin{figure}[ht]
\centering
\begin{subfigure}{0.99\columnwidth}
  \setlength{\abovecaptionskip}{5pt}
        \centering
\includegraphics[width=0.99\linewidth]{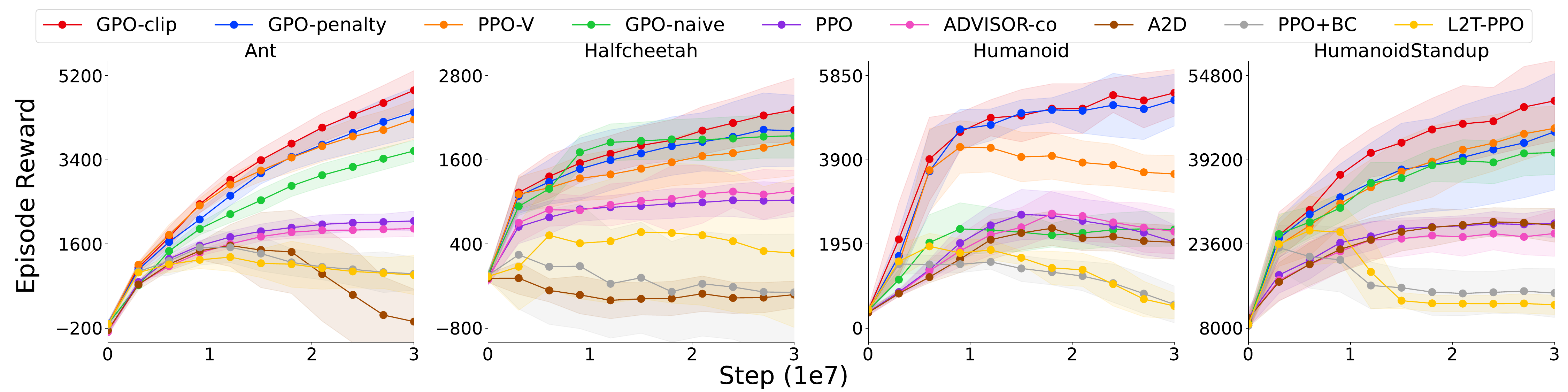}
  \end{subfigure}
  \caption{Performance comparison on selected Brax tasks, including L2T-PPO.}\label{fig:l2t}
\end{figure}

Here, we present additional baselines that were not included in the main paper.
First, L2T-RL tackles a problem similar to ours by employing a fully observable teacher to supervise a partially observable student. However, L2T-RL resembles PPO+BC, as it uses the teacher purely as a behavioral policy without aligning it with the student’s policy, which results in ineffective supervision. Moreover, L2T-RL relies on the teacher’s experience to train the student through RL without any offline adaptation, further limiting the effectiveness of the RL training. Fig. \ref{fig:l2t} illustrates the performance of the PPO-based L2T-RL (details in Appendix~\ref{app:base}), where L2T-PPO performs similarly to PPO+BC and falls short of the other methods proposed in our work.

Second, we provide a more detailed comparison with methods that train a teacher first and then apply TSL techniques to address the challenge of an inimitable teacher. Two state-of-the-art approaches in this category are ADVISOR and TGRL, which are based on PPO and SAC, respectively. We evaluate these methods using a PPO-trained teacher with full observability on the Ant task. The results, shown in Fig. \ref{fig:teacher-app}, indicate that while both methods perform well and demonstrate improved efficiency over their respective base algorithms under full observability, their performance degrades in the partially observable Ant task. In this case—where the teacher's policy is effectively inimitable—the TSL methods perform comparably to their base RL algorithms.

Fig. \ref{fig:teacher-kl} shows the KL divergence between the agent policies of these TSL methods and the teacher. Under full observability, where the teacher was trained, the agents can successfully mimic the teacher’s policy. However, under partial observability, the agents struggle to imitate the teacher’s behavior, leading to a substantial KL divergence. Since both ADVISOR and TGRL revert to standard RL when the teacher becomes inimitable, this explains their performance similarity to the base algorithms in such scenarios.

Additionally, we report TGRL’s performance across the 28 Brax tasks used in our experiments (see Fig. \ref{app:tgrl}). Note that TGRL follows an off-policy training paradigm, in contrast to all other methods presented in the main paper, which makes it significantly slower (refer to Table~\ref{app:comp}). Therefore, we run TGRL for only 1M steps, which is sufficient for convergence as shown by the learning curves.

Finally, we include the representation learning method RMA~\citep{kumar2021rma}, which aims to reconstruct the latent privileged information used by the teacher during the student training phase. Such representation learning approaches are promising when privileged information can be reliably approximated from partial observations. However, their effectiveness is limited in Brax, where observations are noisy (Figure~\ref{app:tgrl}). Since the noise cannot be removed by a deterministic mapping without additional structure or assumptions, regression-based reconstruction tends to collapse to identity mappings and fails to recover meaningful latent representations. Moreover, current encoder-based pipelines often lack theoretical guarantees for convergence or generalization across diverse tasks, particularly under partial observability.

\begin{figure}[ht]
\centering
\begin{subfigure}{0.46\columnwidth}
  \setlength{\abovecaptionskip}{5pt}
        \centering
\includegraphics[width=0.99\linewidth]{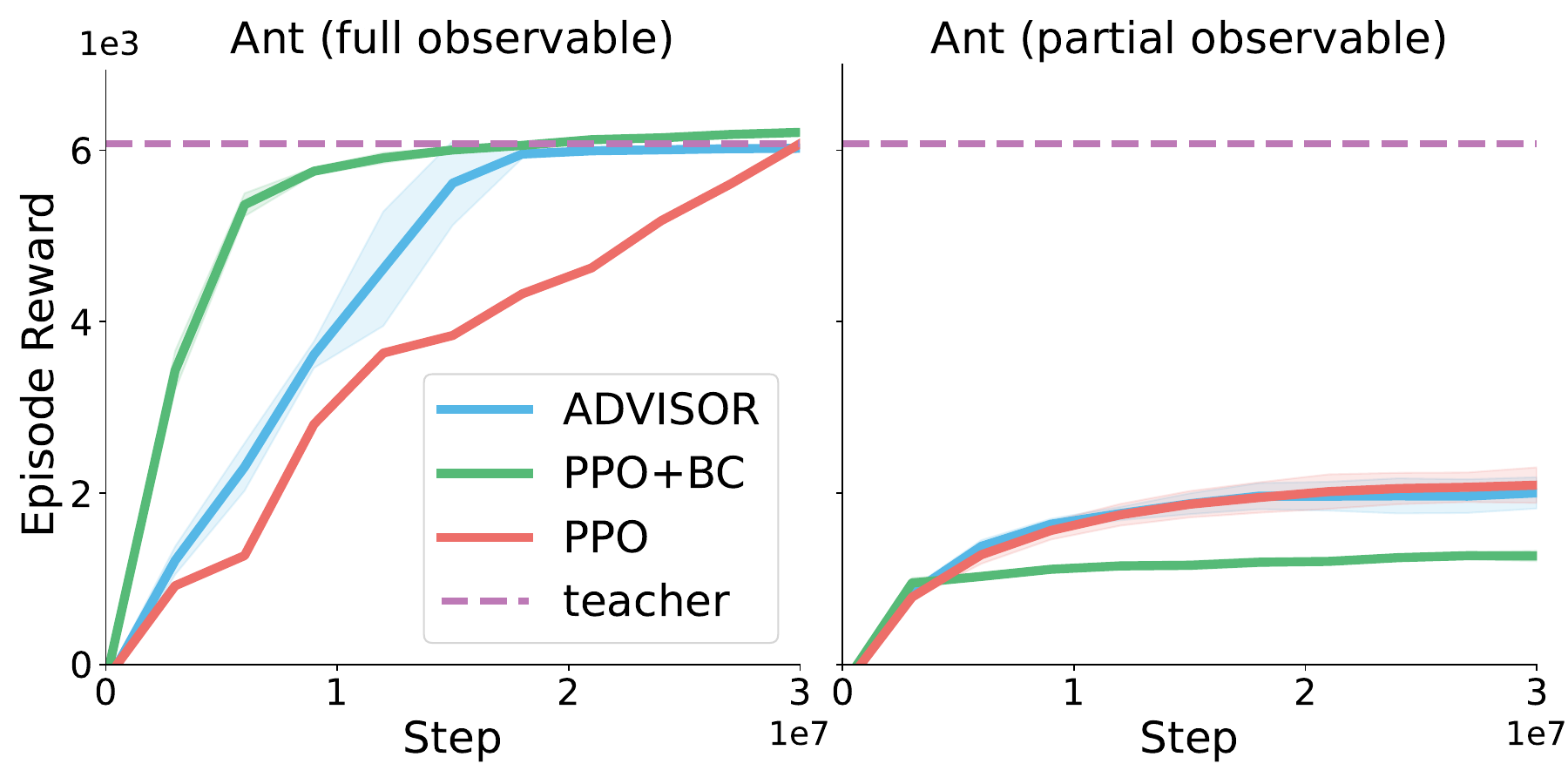}
  \end{subfigure}
\hspace{0.2cm}
\begin{subfigure}{0.46\columnwidth}
  \setlength{\abovecaptionskip}{5pt}
        \centering
\includegraphics[width=0.99\linewidth]{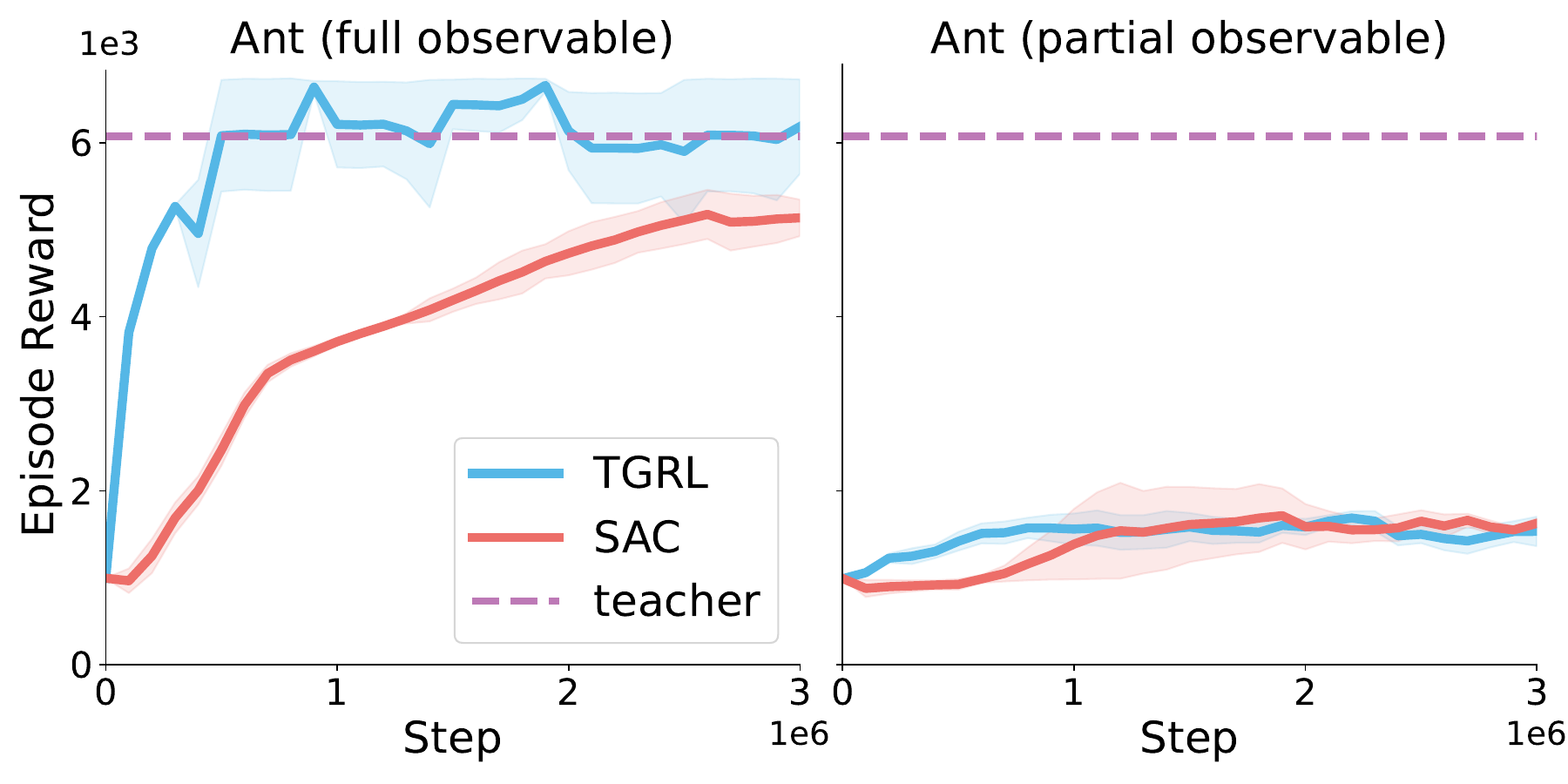}
  \end{subfigure}
  \caption{ADVISOR, PPO+BC and TGRL with a pre-trained teacher.}\label{fig:teacher-app}
\end{figure}

\begin{figure}[ht]
\centering
\begin{subfigure}{0.46\columnwidth}
  \setlength{\abovecaptionskip}{5pt}
        \centering
\includegraphics[width=0.99\linewidth]{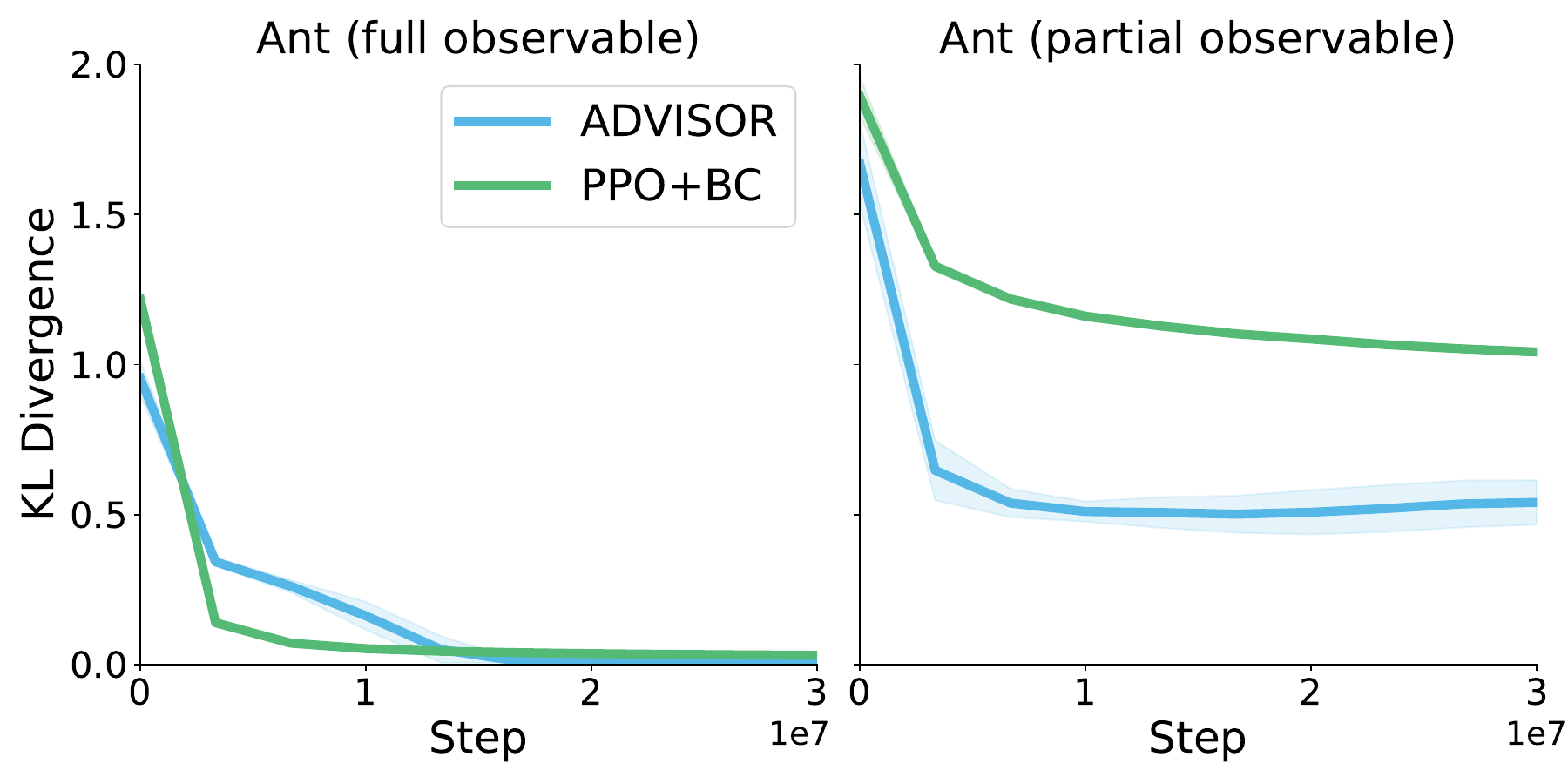}
  \end{subfigure}
\hspace{0.2cm}
\begin{subfigure}{0.46\columnwidth}
  \setlength{\abovecaptionskip}{5pt}
        \centering
\includegraphics[width=0.99\linewidth]{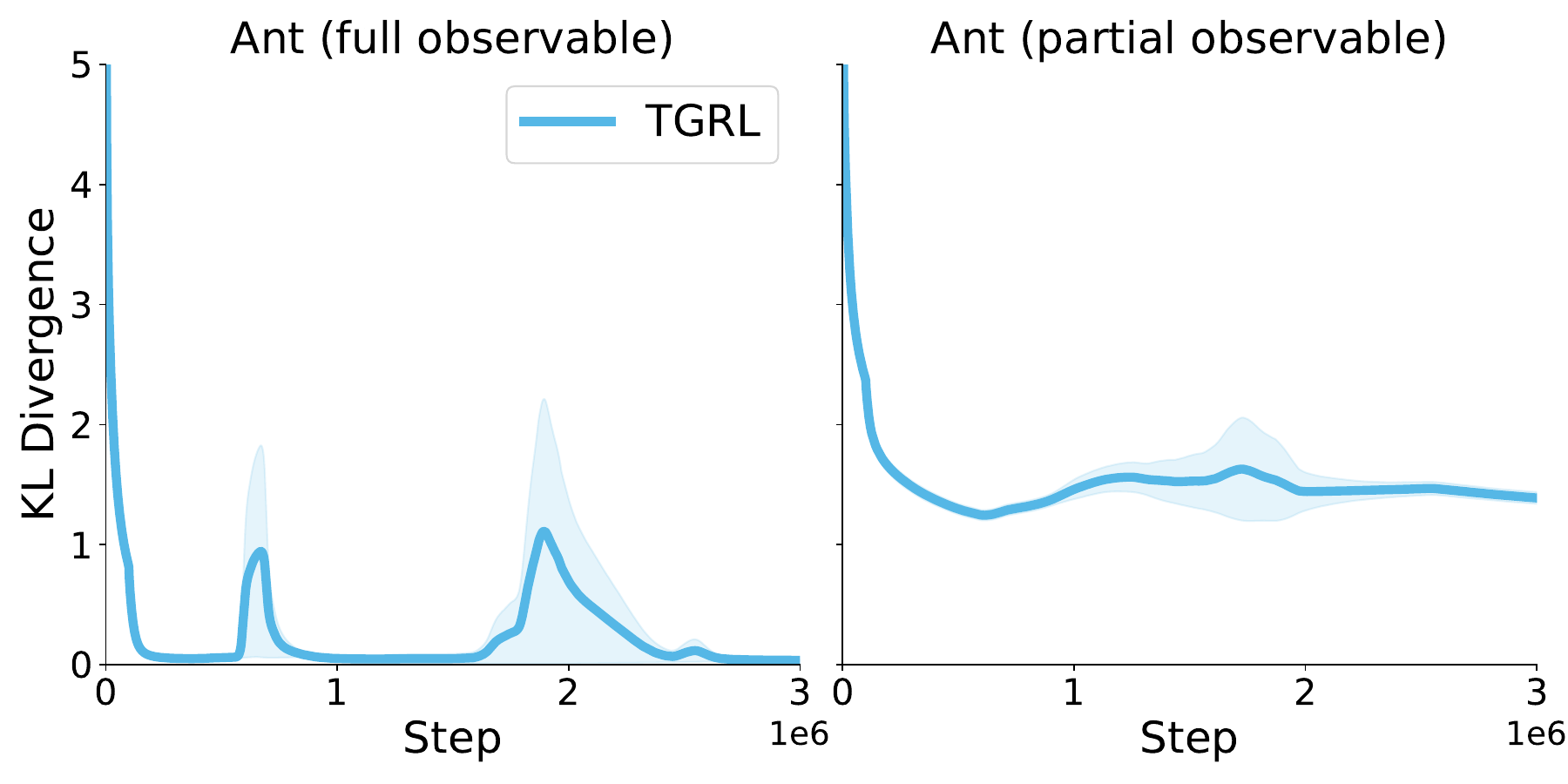}
  \end{subfigure}
  \caption{The KL divergence of ADVISOR, PPO+BC and TGRL with pre-trained teacher.}\label{fig:teacher-kl}
\end{figure}

\begin{figure}
\centering
\begin{subfigure}{\columnwidth}
  \setlength{\abovecaptionskip}{5pt}
        \centering
\includegraphics[width=1\linewidth]{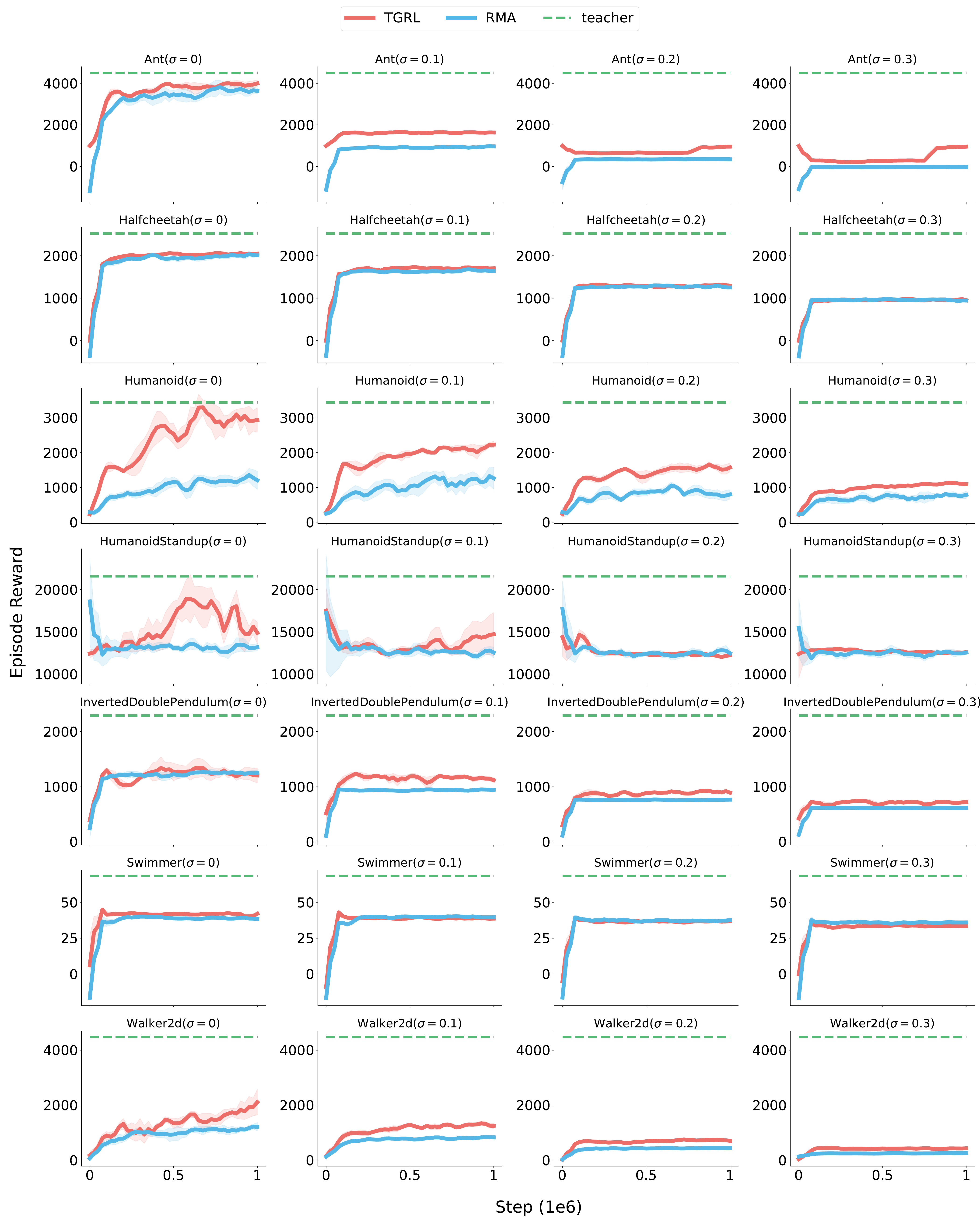}
  \end{subfigure}
  \caption{TGRL and RMA with pre-trained teacher on 28 Brax tasks.}\label{app:tgrl}
\end{figure}

\subsection{Additional Figures}
Fig. \ref{app:brax} shows the reward curves of the experiments presented in Section \ref{sec:exp2}.
Fig. \ref{app:share} illustrates the performance influenced by the parameter sharing.
We can observe that parameter sharing can sometimes impair performance, particularly when the observation dimension is large. For instance, in the \textit{HumanoidStandup} task, the observation dimension is 400, which challenges the expressive capacity of the network. Thus, the decision to share the policy network represents a trade-off between memory efficiency and performance.

\begin{figure}
\centering
\begin{subfigure}{\columnwidth}
  \setlength{\abovecaptionskip}{5pt}
        \centering
\includegraphics[width=1\linewidth]{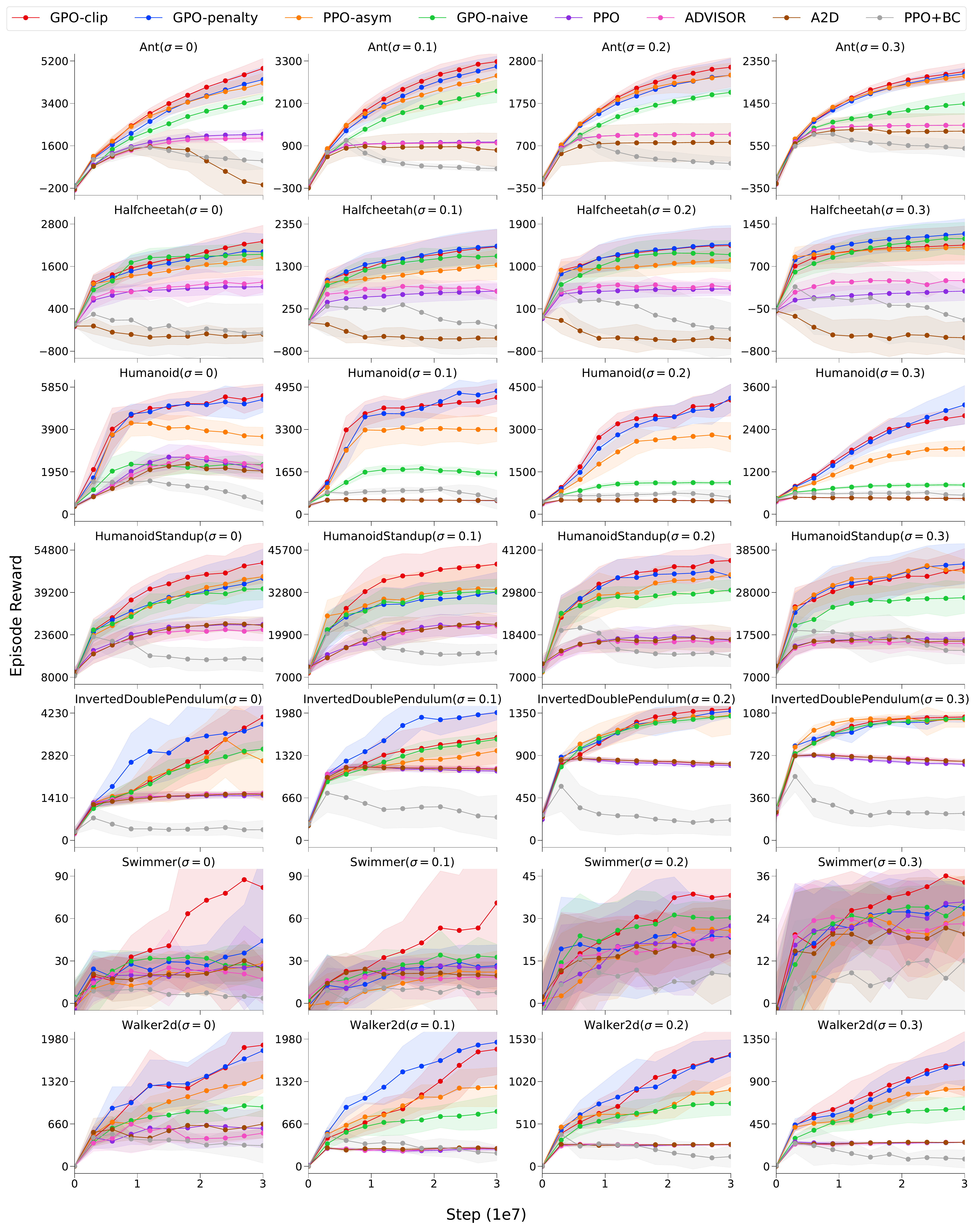}
  \end{subfigure}
  \caption{Comparing betweetn GPO and other baselines on 28 Brax tasks.}\label{app:brax}
\end{figure}

\begin{figure}[ht]
\centering
\begin{subfigure}{\columnwidth}
  \setlength{\abovecaptionskip}{5pt}
        \centering
\includegraphics[width=1\linewidth]{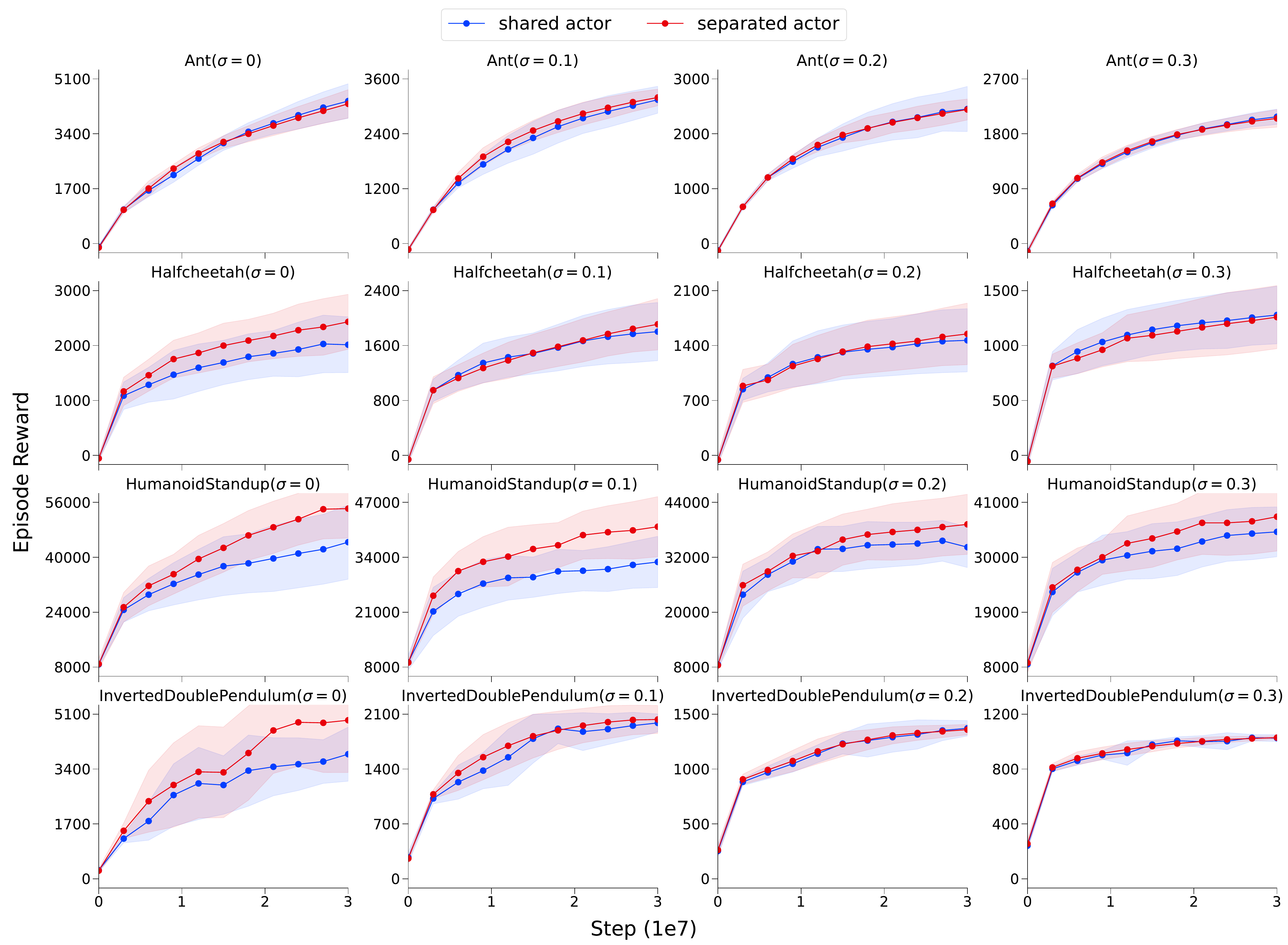}
  \end{subfigure}
  \caption{Comparing shared and separated policy networks of GPO-penalty.}\label{app:share}
\end{figure}

\begin{figure}[ht]
\centering
\begin{subfigure}{\columnwidth}
  \setlength{\abovecaptionskip}{5pt}
        \centering
\includegraphics[width=1\linewidth]{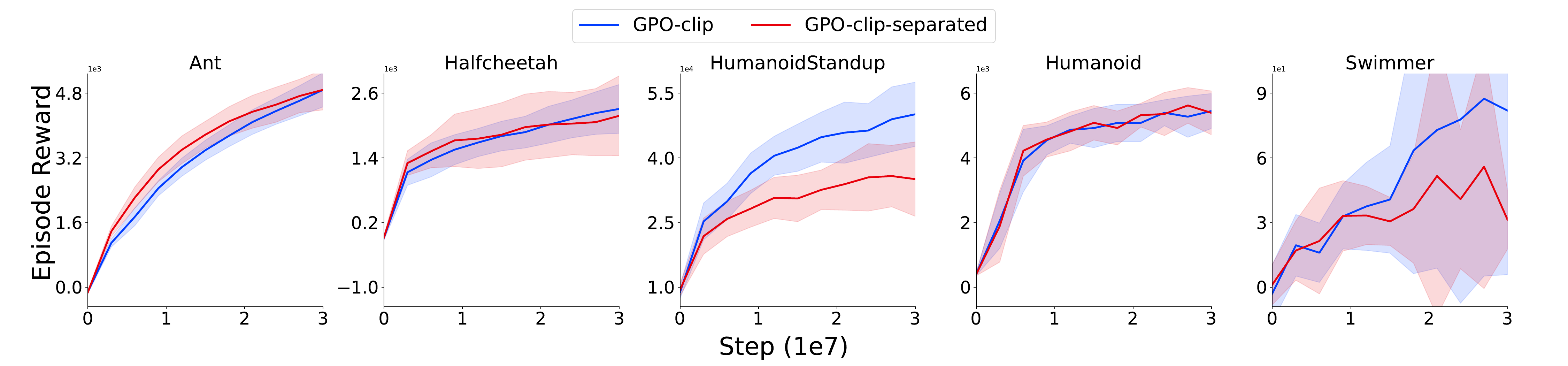}
  \end{subfigure}
  \caption{Comparing joint and separated update of GPO-clip.}
\end{figure}

\begin{figure}[ht]
\centering
\begin{subfigure}{\columnwidth}
  \setlength{\abovecaptionskip}{5pt}
        \centering
\includegraphics[width=1\linewidth]{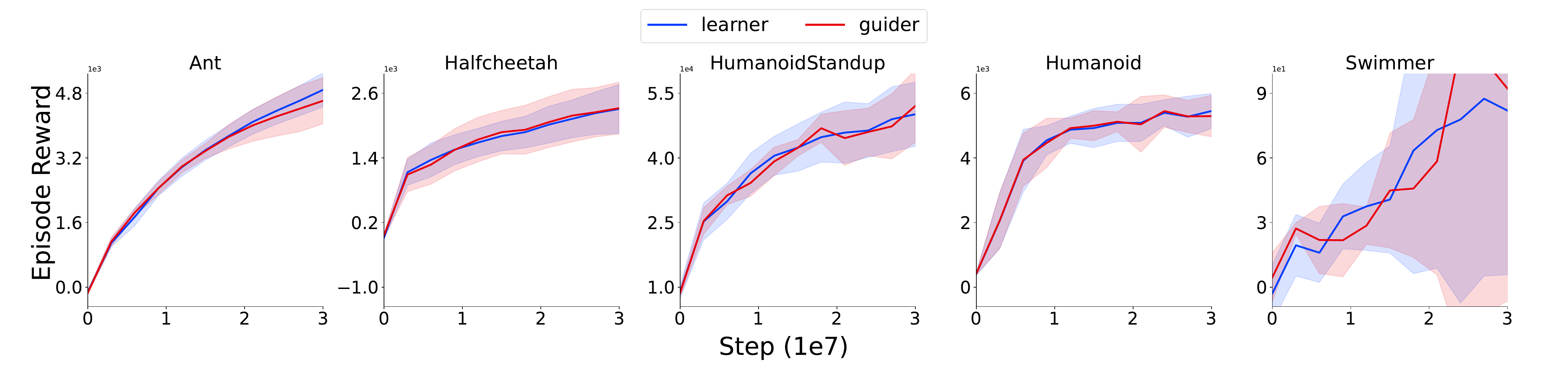}
  \end{subfigure}
  \caption{Comparing the performance of guider and learner of GPO-clip.}
\end{figure}

\subsection{Computational Cost} 
In this section, we compare the computational cost of GPO (both GPO-penalty and GPO-clip share the same cost), PPO-asym, TGRL and the environmental step time across several environments. The results, presented in Table~\ref{app:comp}, show that GPO is approximately 10\% to 20\% slower than PPO-asym.
Importantly, GPO achieves this with no additional networks, underscoring its efficiency despite the modest increase in computational overhead.

\begin{table}[ht] 
\centering 
\caption{Frames Per Second (FPS) of GPO, PPO-asym and TGRL across several environments, computed on the NVIDIA GeForce RTX 4090.} \label{app:comp} 
\begin{tabular}{c|cccc} 
\hline Environment & GPO & PPO-asym & TGRL &Environmental Step\\
\hline
Ant & $1.19 \times 10^5$ & $1.36 \times 10^5$ & $1.13\times 10^2$ & $4.23 \times 10^5$ \\ 
Halfcheetah & $6.27 \times 10^4$ & $7.21 \times 10^4$ & $9.58\times 10^1$ &$2.55 \times 10^5$ \\
Humanoid & $6.29 \times 10^4$ & $7.18 \times 10^4$ & $9.92\times 10^1$& $2.50 \times 10^5$ \\
Swimmer & $3.33 \times 10^4$ & $3.83 \times 10^4$ & $1.04\times 10^2$& $1.50 \times 10^5$ \\
\hline 
\end{tabular} 

\end{table}

\

\end{document}